\newtheorem{theorem}{Theorem}
\newtheorem{assumption}{Assumption}
\newtheorem{lemma}{Lemma}
\newtheorem{proposition}{Proposition}
\newtheorem{definition}{Definition}
\begin{document}

\title{Realizing data features by deep nets}

\author{Zheng-Chu Guo,  Lei Shi, and Shao-Bo Lin
\IEEEcompsocitemizethanks{\IEEEcompsocthanksitem Z. C. Guo is with
 School of Mathematical Sciences, Zhejiang University,
Hangzhou, China. L. Shi is with
Shanghai Key Laboratory for Contemporary Applied Mathematics, School
of Mathematical Sciences, Fudan University, Shanghai, China. S. B. Lin is with   Department of
Mathematics, Wenzhou University, Wenzhou, China. Z. C. Guo and L. Shi are the co-first author. The corresponding author is S. B. Lin (email:
sblin1983@gmail.com).}}

  \IEEEcompsoctitleabstractindextext{

\begin{abstract}
This paper considers the power of deep neural networks (deep nets
for short) in realizing data features. Based on   refined covering
number estimates, we find that, to realize  some complex data
features, deep nets can improve the performances of shallow neural networks (shallow nets for short)
without requiring additional capacity costs. This verifies  the advantage of deep nets in realizing complex
features. On the other hand, to realize some simple data feature like the smoothness, we prove that, up to
a logarithmic factor, the approximation rate of deep nets is asymptotically identical to that of shallow nets,
provided that the depth is fixed. This exhibits a limitation of deep nets in realizing simple features.
\end{abstract}

\begin{IEEEkeywords}
Neural networks, Approximation rates, Deep nets, Covering numbers, Data feature.
\end{IEEEkeywords}}

\maketitle

\IEEEdisplaynotcompsoctitleabstractindextext

\IEEEpeerreviewmaketitle


\section{Introduction}
Deep learning  \cite{Hinton2006} is recognized to be a
state-of-the-art scheme in artificial intelligence and machine
learning and has recently triggered enormous research activities.
Deep neural networks (deep nets for short) is believed to be capable
of discovering deep features of data  which are important but are
impossible to be found by shallow neural networks (shallow nets for
short). It, however, simultaneously produces a series of challenges
such as the efficient computation, algorithmic solvability,
robustness, interpretability and so on. A direct consequence of
these challenges is that users hesitate to utilize deep learning in
learning tasks with high risk such as the clinical diagnosis and
financial investment, since it is not clear whether deep nets
perform essentially better than the scheme in hand. Thus, it is
urgent and crucial to provide the theoretical guidance on ``when do deep nets perform  better than shallow nets?''

Generally speaking, there are three steps to study the above problem. The first step is to correspond specific
real-world applications to some data features. For example, figures are assumed to be local similarity \cite{Wang2004};
earthquake forecasting is related to rotation-invariant features
\cite{Adeli2009}; and computer vision requires the spareness of
activated neurons on  the receptive field \cite{Wright2010}. The
second step is to connect these data features with a-priori
information which can be mathematically reflected by specific
properties of functions. In particular, local similarity usually corresponds to piece-wise smooth functions
\cite{Petersen2017}; rotation-invariance generally corresponds to
radial functions \cite{Chui2018a} and  sparseness on  the receptive
field frequently corresponds to sparseness in the spacial domain
\cite{Lin2018}. The last step is to pursue the outperformance of
deep nets   in approximating or learning   these application-related
functions. In fact, the outperformance of deep nets has been
rigorously  verified in approximating piece-wise smooth functions
\cite{Petersen2017}, rotation-invariant functions \cite{Chui2018a}
and sparse functions \cite{Lin2018}, which coincides with the
empirical evidences on image classification \cite{Krizhevsky2012},
earthquake prediction \cite{Vikraman2016} and computer vision
\cite{Lecun2015}.

With the rapid development in deep nets approximation theory, there
are numerous features that are proved to be realizable by deep nets
\cite{Chui2018,Lin2018,McCane2017,Mhaskar2016a,Petersen2017,Shaham2015}
with much less neurons than  shallow nets. Different from these
encouraging results, studies in learning theory showed that,
however, to realize these features, capacities of deep nets are much
larger than those of shallow nets with comparable number of free
parameters. In particular, under some specified capacity
measurements such as the number of linear regions
\cite{Montufar2013}, Betti numbers \cite{Bianchini2014}, number of
monomials \cite{Delalleau2011}, it was proved that the capacity of
deep nets increases exponentially with respect  to the depth but
polynomially with respect to the width. An extreme case is that
there exist  deep nets with two hidden layers whose capacity
measured by the pseudo-dimension is infinite
\cite{Maiorov1999b,Maiorov1999c}. The large capacity of deep nets
inevitably   makes the deep nets learner sensitive to noise and
requires a large amount of computations to find a good estimator.

In a nutshell, previous studies on advantages of deep nets showed
that deep nets are capable of   realizing  various
application-related data features, but it  requires additional
capacity costs. The first purpose of our study is to figure out
whether the large capacity of deep nets to realize  data features is
necessary. Our study is based on two interesting observations from
the literature
\cite{Chui1994,Lin2018,McCane2017,Mhaskar2016a,Petersen2017,Shaham2015,Yarotsky2017}.
One is that the number of layers of deep nets to realize various
data features is small, the order of which is at most the logarithm of the number
of free parameters. The other is that the magnitude of free
parameters is relatively small, which is at most a polynomial with
respect  to the number of free parameters. With these two findings,
we adopt the well known covering number \cite{Zhou2002,Zhou2003} to
measure the capacity of deep nets with controllable number of layers
and magnitude of weights and present a refined estimate of the
covering number of deep nets. In particular, we prove that  the
 covering number of deep nets with controllable depth and magnitude of weights  is similar as that of shallow nets with comparable
free parameters. This finding together with existing results in
approximation theory shows that, to realize  various features such
as sparseness, hierarchy, rotation-invariance and manifold
structures, deep nets improve the performance of  shallow
nets without  bringing additional capacity costs.

As is well known, advantages of deep nets in realizing  some special
features do not mean that deep nets are always better than shallow
nets.  Our second purpose is to demonstrate the necessity of deepening
networks in realizing some simple data features.
After building a close relation between approximation rates and
covering number estimates, we prove that if only the smoothness
feature is explored, then  up to a logarithmic factor,
approximation rates of shallow nets and deep nets with controllable
depth and magnitude of weights are asymptotically identical.
Combining the above two statements, we indeed present rigorous
theoretical verifications to support that deep nets are necessary in a large number of applications corresponding to
complex data features, in the sense that deep nets  realize  data
features without any additional capacity costs, but not all.

The rest of paper is organized as follows. In the next section,
after reviewing some advantages of deep nets in approximation, we
present a refined  covering number estimate  for deep nets.
 In Section \ref{Sec.Main results}, we give  a lower bound for deep nets approximation to show
the limitation for deep nets in realizing simple features. In the
last section, we draw a simple conclusion of this paper.

\section{Advantages of Deep Nets in Realizing Feature}\label{Sec.Advantage}
In this section, we study  advantages of deep nets in approximating
classes of functions with complex  features. After introducing some
mathematical concepts associated with deep nets, we review some
important results in approximation theory which show
that deep nets can realize some application-related features  that cannot be
approximated by shallow nets with comparable free parameters. Then,
we present a refined covering number estimate for deep nets to show
that deepening networks in some special way does not enlarge the
capacity of shallow nets.
\subsection{Deep nets with fixed structures}

Great progress of deep learning is built on deepening neural
networks with  structures. Deep nets with different structures have
been proved to be universal, i.e., \cite{Zhou2018,Zhou2018a} for
deep  convolutional nets, \cite{Kohler2017} for deep nets with tree
structures and \cite{Hanin2017} for deep fully-connected neural
networks.

Let $\mathbb{I}:=[-1,1]$ and
$x=(x^{(1)},\dots,x^{(d)})\in\mathbb I^d=[-1,1]^d$. Let
$L\in\mathbb N$ and $d_0,d_1,\dots,d_L\in\mathbb N$ with $d_0=d$.
Assume $\sigma_k:\mathbb R\rightarrow\mathbb R$, $k=1,\dots,L$, be
univariate nonlinear functions. For
$\vec{h}=(h^{(1)},\dots,h^{(d_k)})^T\in\mathbb R^{d_k}$, define
$\vec{\sigma}_k(\vec{h})=(\sigma_k(h^{(1)}),\dots,\sigma_{k}(h^{(d_k)}))^T$.
Deep nets with depth $L$ and width $d_j$ in the $j$-th hidden layer can be mathematically
represented as
\begin{equation}\label{Def:DFCN}
     h_{\{d_0,\dots,d_L,\sigma\}}(x)=\vec{a}\cdot
     \vec{h}_L(x),
\end{equation}
where
\begin{equation}\label{Def:layer vector}
    \vec{h}_k(x)=\vec{\sigma}_k(W_k\cdot
    \vec{h}_{k-1}(x)+\vec{b}_k),\qquad k=1,2,\dots,L,
\end{equation}
$\vec{h}_{0}(x)=x,$ $\vec{a}\in\mathbb R^{d_L}$,
$\vec{b}_k\in\mathbb R^{d_k},$
 and $W_k=(W_k^{i,j})_{i=1,j=1}^{d_{k},d_{k-1}}$
be a $d_{k}\times
 d_{k-1}$ matrix. Denote by $\mathcal H_{\{d_0,\dots,d_L,\sigma\}}$ the set of
all these deep nets. When $L=1$,  the function defined by
(\ref{Def:DFCN}) is the classical shallow net.

The structure of  deep nets can be  reflected by structures of the
weight matrices $W_k$ and parameter vectors $\vec{b}_k$ and
$\vec{a}$, $k=1,2,\dots,L$. For examples, deep convolutional neural
networks corresponds to  Toeplitz-type weight matrices
\cite{Zhou2018a} and deep nets with tree structures usually
correspond extremely sparse weight matrices \cite{Mhaskar2016a}.
Throughout this paper, a deep net with specific structures
refers to a deep nets with specific structures of all
$W_k,\vec{b}_k$, $k=1,\dots,L$ and $\vec{a}$. Figure 1 shows two
structures for deep nets.
\begin{figure}[!t]
\begin{minipage}[b]{.49\linewidth}
\centering
\includegraphics*[scale=0.17]{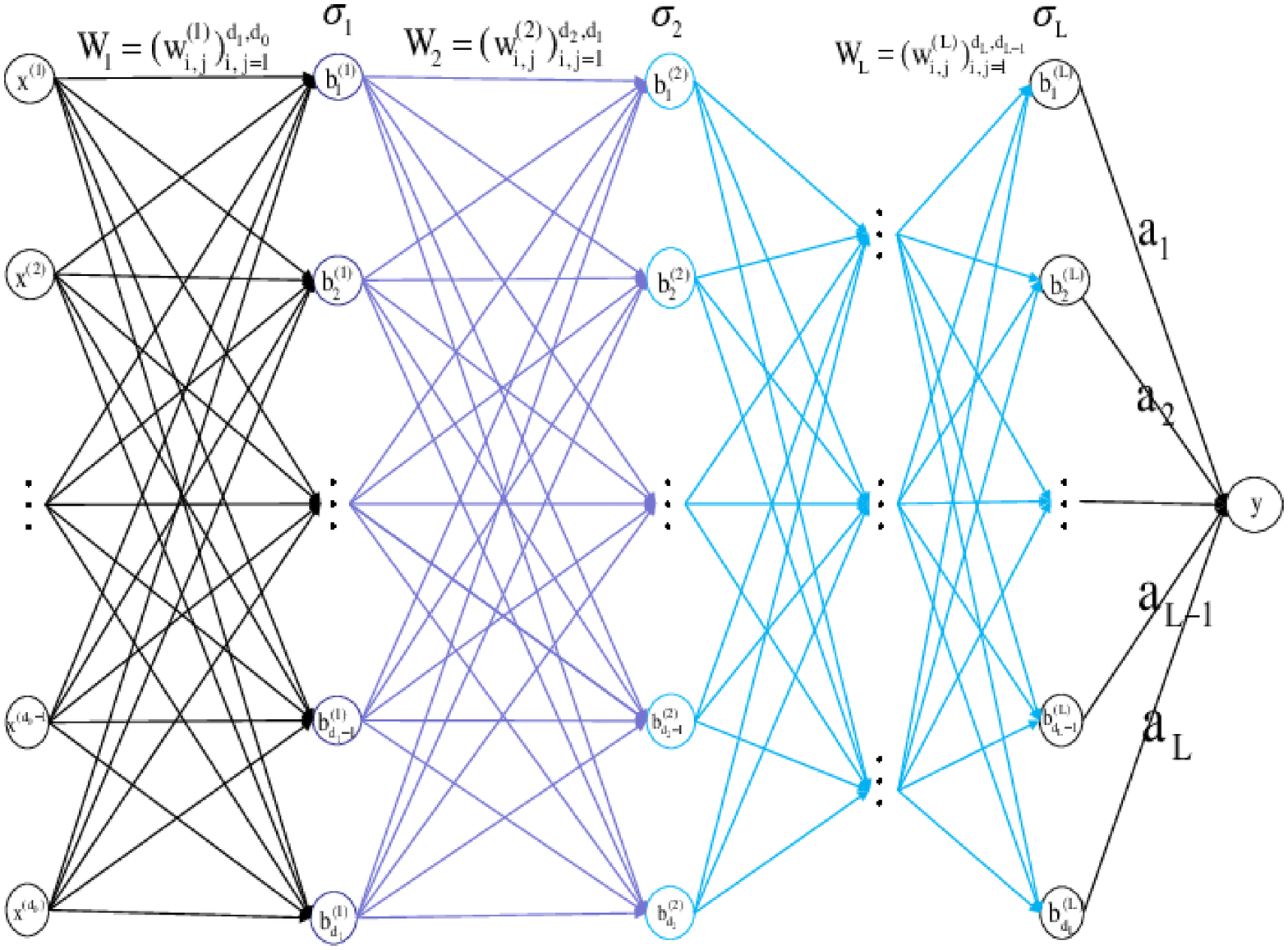}
\centerline{{\small (a) Deep fully-connected nets}}
\end{minipage}
\hfill
\begin{minipage}[b]{.49\linewidth}
\centering
\includegraphics*[scale=0.17]{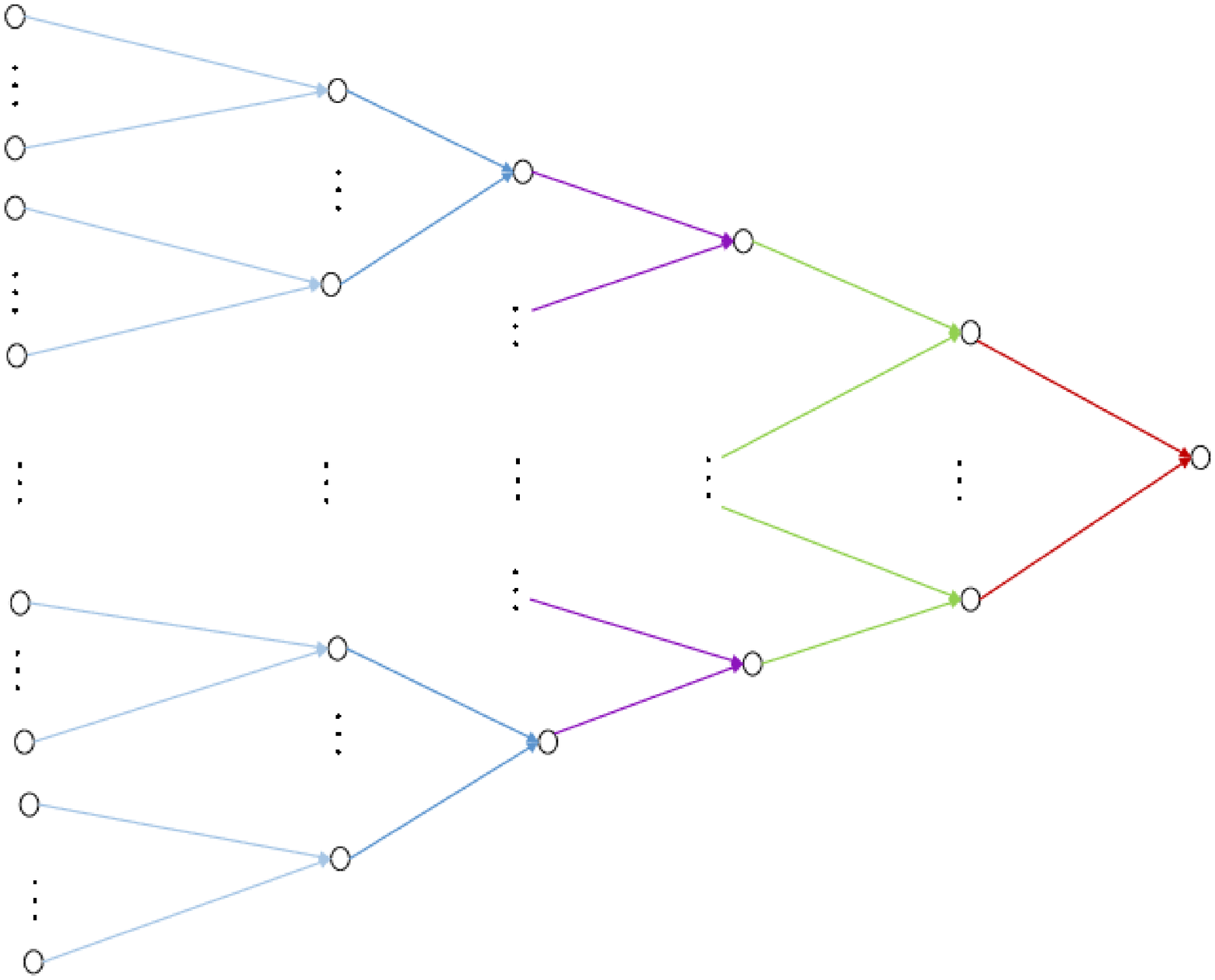}
\centerline{{\small (b) Deep nets with tree structure}}
\end{minipage}
\hfill \caption{ Structures for deep nets }
\end{figure}

Although deep fully-connected neural networks possess better
approximation ability than other networks, the number of free
parameters of this type networks  is
\begin{equation}\label{Number of parameters}
      \mathcal A_L=d_L+\sum_{k=1}^L (d_{k-1}d_k+d_{k}),
\end{equation}
which is huge when the width and depth are  large. A recent focus in
deep nets  approximation is to pursue the approximation ability of
deep nets with fixed structures. Up till now,  numerous theoretical
results \cite{Poggio2017,Zhou2018a,Chui2018a,Petersen2017} showed
that the approximation ability of deep fully-connected neural
networks can be maintained by deep nets with some special structures
with much less free parameters.

In this paper, we are   interested  in deep nets with structures.
For $k=1,\dots,L$, we assume that the structure of deep nets is
fixed and  there are $\mathcal F_{k,w}$ free parameters in $W_k$,
$\mathcal F_{k,b}$ free thresholds in $\vec{b}_k$ and $\mathcal
F_{L,a}$ free parameters in $\vec{a}$. Then, there are totally
\begin{equation}\label{papr for spasre}
     n:=\sum_{k=1}^L(\mathcal F_{k,w}+\mathcal
     F_{k,b})+\mathcal F_{L,a}
\end{equation}
free parameters in the deep nets. We assume further $n \ll
\mathcal A_L$. Throughout the paper, we say there are $\mathcal F_{k,w}$ free parameters in $W_k$,
if the weight matrix $W_k$ is generated through the following three ways. The first way is that the matrix
has $\mathcal F_{k,w}$ entries that can be determined freely, while the
reminder $d_{k}d_{k-1}-\mathcal F_{k,w}$ entries are fixed, e.g., the weight matrix in deep nets with tree structures. The second way
 is that the weight matrix $W_k$ is exactly generated by $\mathcal F_{k,w}$ free parameters, e.g., the Toeplitz-type
weight matrix in deep convolutional neural networks. The third way is that the weight matrix is generated jointly by both
way above, that is, part of the weight matrix is fixed, while the remaining part
are totally generated by $\mathcal F_{k,w}$ free parameters. Denote by $\mathcal H_{\{n,L,\sigma\}}$ the set of
all these deep nets with $L$ hidden layers, fixed structure and $n$
free parameters. Denote further
\begin{eqnarray}\label{Hypothesis space}
       &&\mathcal H_{\{n,L,\sigma,\mathcal R\}}=
       \{h_{n,L,\sigma}
      \in \mathcal H_{\{n,L,\sigma\}}: \nonumber\\
      &&|w_{k}^{i,j}|,|b_k^{i}|,|a_i|\leq \mathcal R,
      1\leq i\leq d_{k},1\leq j\leq d_{k-1},\nonumber\\
      && 1\leq k\leq
      L\}
\end{eqnarray}
the set of deep nets whose weights and thresholds are uniformly
bounded by $\mathcal R$, where $\mathcal R$ is some positive number
which may depend on $n$, $d_k$, $k=0,1\dots,L$ and $L$. We  aim at
studying the approximation ability and capacity   of $\mathcal
H_{\{n,L,\sigma,\mathcal R\}}$.

It should be mentioned that the boundedness assumption in
(\ref{Hypothesis space}) is necessary. In fact, without such an
assumption, \cite{Maiorov1999b,Ismailov2014} proved that for
arbitrary $\varepsilon>0$ and arbitrary continuous function $f$,
a deep net with two hidden layers and finitely many
free parameters is fully able to generate an approximation $H_{f}$, such that
\begin{equation}\label{Universal approximation}
          \|f-H_f\|_{L_p(\mathbb I^d)}\leq \varepsilon.
\end{equation}
This implies that the capacity of deep nets with two hidden layers
and finitely many free parameters is comparable with that of
$L_p(\mathbb I^d)$, showing its extremely large capacity. Therefore, to further control the
capacity of deep nets, the boundedness assumption  has been employed in large literature
\cite{Kohler2017,Lin2018,Petersen2017}.

\subsection{A fast review for   realizing   data features by deep nets}

In approximation and learning theory, data features are usually
formulated by a-priori information for corresponding functions, like
the target function \cite{Chui1994} for approximation,  regression
function \cite{Lin2018} for regression and  Bayes decision function
\cite{Lin2017}
 for classification.
Studying advantages of deep nets in approximating functions with
different a-priori information  is a classical topic. It can date
back to 1994, when \cite{Chui1994} deduced the localized
approximation property of deep nets which is far beyond the
capability of shallow nets.

The localized approximation of a neural network  shows that if the
target function is modified only on a small subset of the Euclidean
space, then only a few neurons, rather than the entire network, need
to be retrained. We refer to \cite[Def.2.1]{Chui1994} for a formal
definition of localized approximation. Since the localized
approximation is an important step-stone in approximating
piecewise smooth functions \cite{Petersen2017} and sparse functions
in spacial domains \cite{Lin2018}, deep nets perform much better
than shallow nets in related applications such as image processing
and computer vision \cite{Goodfellow2016}.
 The following proposition,
which can be found in \cite[Theorem 2.3]{Chui1994} (see also
\cite{Lin2018}), shows the localized approximation property of deep
nets.

\begin{proposition}\label{Proposition}
Suppose that $\sigma:\mathbb R\rightarrow\mathbb R$ is a bounded
measurable function with the sigmoidal property
\begin{equation}\label{sigmoidal}
        \lim_{t\rightarrow-\infty}\sigma(t)=0,\qquad \lim_{t\rightarrow+\infty}
        \sigma(t)=1.
\end{equation}
Then  there exists a deep net with two hidden layers, $2d+1$ neurons
and activation function $\sigma$ provides localized approximation.
\end{proposition}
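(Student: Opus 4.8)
The plan is to construct, for a prescribed bounded box $Q = \prod_{i=1}^d [u_i, v_i] \subset \mathbb{I}^d$, an explicit two-hidden-layer net with activation $\sigma$ that is close to $1$ on $Q$ and close to $0$ outside a slightly enlarged box, and then argue that modifying a target function only on such a box requires retraining only the output weight. First I would use the sigmoidal property \eqref{sigmoidal} to build a one-dimensional ``bump'': for each coordinate $x^{(i)}$, the combination $\sigma(A(x^{(i)}-u_i)) - \sigma(A(x^{(i)}-v_i))$ uses two neurons and, as the scaling parameter $A\to\infty$, tends to the indicator of the interval $[u_i,v_i]$ pointwise on its interior, while staying uniformly bounded. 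Summing these $d$ coordinate bumps and feeding the result through one more neuron in the second hidden layer, together with the output node and the constant input handled by a single extra threshold neuron, accounts for the claimed $2d+1$ neurons: $2d$ in the first hidden layer and one in the second.

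The key steps, in order, are: (i) fix the target box and the scaling $A$; (ii) form the first-layer preactivations $A(x^{(i)}-u_i)$ and $A(x^{(i)}-v_i)$, apply $\sigma$, and take the alternating sum $S(x) := \sum_{i=1}^d [\sigma(A(x^{(i)}-u_i)) - \sigma(A(x^{(i)}-v_i))]$; (iii) observe that for $x$ in the interior of $Q$ we have $S(x) \to d$, whereas if $x^{(i)} \notin [u_i - \delta, v_i + \delta]$ for at least one $i$ then $S(x)$ is bounded away from $d$ (indeed the $i$-th term is near $0$ while the others contribute at most $1$ each, so $S(x) \le d - 1 + o(1)$); (iv) apply a second-layer neuron $\sigma(A'(S(x) - (d - \tfrac12)))$ with a second large scaling $A'$, which by \eqref{sigmoidal} again pushes the value to $1$ on $Q$ and to $0$ on the complement of the enlarged box; (v) conclude that if $f$ is altered only inside $Q$, one writes the new approximant as the old one plus this localized net times the (retrained) discrepancy coefficient, so only finitely many — in fact one — parameters change. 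The formal definition in \cite[Def.~2.1]{Chui1994} is what step (v) must be matched against.

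The main obstacle I expect is controlling the behaviour of $S(x)$ in the transition region — near the boundary of $Q$ — and making the two limits ($A\to\infty$ then $A'\to\infty$) uniform enough that the final net genuinely separates $Q$ from the complement of its $\delta$-neighbourhood for every fixed $\delta>0$. Because $\sigma$ is only assumed bounded and measurable with the two one-sided limits, one cannot invoke continuity or monotonicity; the argument must rely solely on the limiting values in \eqref{sigmoidal} and on uniform boundedness of $\sigma$ to bound the ``wrong'' terms. Once the separation is established with the correct neuron count, matching it to the localized-approximation definition of \cite{Chui1994} is routine, and the construction transparently needs two hidden layers since a single hidden layer cannot produce the nonlinear thresholding of the sum $S(x)$ that localizes the support.
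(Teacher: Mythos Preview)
The paper does not give its own proof of this proposition; it is stated there as a quotation of \cite[Theorem~2.3]{Chui1994} (see also \cite{Lin2018}), with no argument supplied. Your construction --- $2d$ first-layer neurons forming the coordinate bumps $\sigma(A(x^{(i)}-u_i))-\sigma(A(x^{(i)}-v_i))$, summed and then thresholded by a single second-layer neuron $\sigma(A'(S(x)-(d-\tfrac12)))$ --- is exactly the classical Chui--Li--Mhaskar construction from that cited source, so your proposal is correct and coincides with the proof the paper is pointing to.
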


Rotation-invariance,  is another popular data feature, which abounds
in statistical physics \cite{LinH2017}, earthquake early warning
\cite{Satriano2011}  and image rendering \cite{Meylan2006}.
Mathematically, rotation-invariant property corresponds to a radial
function which is by definition a function whose value at each point
depends only on the distance between that point and the origin. In
the nice papers  \cite{Konovalov2008,Konovalov2009}, shallow nets
were proved to  be incapable of embodying  rotation-invariance
features. To show the power of depth in approximating radial
functions, we present the definition of smooth radial function as
follows.
\begin{definition}\label{Definition:smoothness}
Let $\mathbb A\subset\mathbb R$, $c_0>0$ and $r=s+v$ with
$s\in\mathbb N_0:=\{0\}\cup\mathbb N$ and $0<v\leq 1$. We say a
univariate function $g:\mathbb A\rightarrow\mathbb R$ is
$(r,c_0)$-Lipschitz continuous if $g$ is $s$-times differentiable
and its $s$-th derivative satisfies the Lipschitz condition
\begin{equation}\label{lip}
          |g^{(s)}(t)-g^{(s)}(t_0)|\leq c_0|t-t_0|^v,\qquad\forall\
          t,t_0\in\mathbb A.
\end{equation}
Denote by $Lip^{(r,c_0)}_{\mathbb A}$ the set of all
$(r,c_0)$-Lipschitz continuous functions defined on $\mathbb A$.
Denote also by $Lip^{(\diamond,r,c_0)}$ the set of radial functions
$f=g(\|x\|_2^2)$ with $g\in Lip^{(r,c_0)}_{[0,1]}$.
\end{definition}
The following proposition, which can be found in \cite{Chui2018a},
shows that deep nets can realize   rotation-invariance  and
smoothness features of  target functions, simultaneously.

\begin{proposition}\label{Proposition:almost optimal}
Let $d\geq 2$ and $1\leq p\leq \infty$. If $\sigma$ is the logistic
function, i.e. $\sigma(t)=\frac{1}{1+e^{-t}}$, then for arbitrary
$f\in Lip^{(\diamond,r,c_0)}$, there is an $h\in \mathcal
H_{\{n,3,\sigma, \mathcal R\}}$   such that
\begin{equation}\label{Optimal approx}
   \|f-h\|_{L^p(\mathbb I^d)}\leq C_1n^{-r}.
\end{equation}
Furthermore,  for arbitrary $h'\in \mathcal H_{\{n,1,\sigma,\mathcal
R\}}$,  there always exists a function $f_0\in
Lip^{(\diamond,r,c_0)}$ satisfying
\begin{equation}\label{limitation for shallow}
   \|f_0-h'\|_{L^\infty(\mathbb I^d)}
   \geq
    C_2 n^{-r/(d-1)},
\end{equation}
where  $C_1$, $C_2$ are constants independent of
$d_0,d_1,\dots,d_L$ or $n$.
\end{proposition}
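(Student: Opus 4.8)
The plan is to prove the two inequalities separately: \eqref{Optimal approx} by an explicit depth-$3$ construction, and \eqref{limitation for shallow} by a lower bound that exploits the ridge (plane-wave) structure of shallow nets (essentially the route taken in \cite{Chui2018a}).

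\emph{Upper bound.} Write $f=g(\|x\|_2^2)$ with $g\in Lip^{(r,c_0)}_{[0,1]}$ and realize $h$ as a composition of two blocks spread over three hidden layers. The first block realizes the map $x\mapsto\|x\|_2^2$: since the logistic $\sigma$ is analytic, a second-order divided difference of $s\mapsto\sigma(\beta+s)$ (with $\beta$ chosen so that $\sigma''(\beta)\neq0$) reproduces $t\mapsto t^2$ on $\mathbb I$ up to any prescribed accuracy using only a bounded number of neurons; applying this coordinatewise in the first hidden layer and forming $\sum_{i=1}^d(x^{(i)})^2$ through the affine map into the second hidden layer yields, over the first two hidden layers, a sub-network with $O(d)$ neurons whose internal value $u$ satisfies $|u-\|x\|_2^2|\le\epsilon_1$ on $\mathbb I^d$ (up to a fixed smooth monotone reparametrization coming from $\sigma_2$). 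The second block occupies the third hidden layer and applies to $u$ a classical shallow sigmoidal approximant of the univariate profile: a one-dimensional Jackson-type estimate gives a shallow $\sigma$-net with $N$ neurons approximating any function in $Lip^{(r,c_0)}_{[0,1]}$ in $L^\infty$ at rate $O(N^{-r})$. The composed network lies in $\mathcal H_{\{n,3,\sigma,\mathcal R\}}$; since the profile is Lipschitz, feeding $u$ instead of $\|x\|_2^2$ costs only $O(\epsilon_1)$, so taking $\epsilon_1\asymp N^{-r}$ and noting that the squaring block contributes $O(d)$ parameters (hence $n\asymp N$) gives $\|f-h\|_{L^p(\mathbb I^d)}\le C_1 n^{-r}$. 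The delicate part of this argument is the uniform bound $\mathcal R$: the divided-difference realization of $t^2$ uses coefficients of order $\delta^{-2}$, where $\delta$ is the small step size, which is why the squaring is laid out over two hidden layers with a moderate $\delta$, and one has to check that the internal error propagates harmlessly through the later sigmoids by keeping each subsequent layer in the near-affine regime of $\sigma$. I expect this bookkeeping, rather than any new idea, to be the bulk of the work here.

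\emph{Lower bound.} The key observation is that every $h'\in\mathcal H_{\{n,1,\sigma,\mathcal R\}}$ is a sum of $n$ ridge functions $x\mapsto a_j\sigma(w_j\cdot x+b_j)$, one per direction $w_j/\|w_j\|\in S^{d-1}$, and that radial functions are intrinsically expensive to build out of ridges. Indeed, an exact ridge representation of $\|x\|_2^{2m}$ is, via the identity $\|x\|_2^{2m}=c_{m,d}\int_{S^{d-1}}(w\cdot x)^{2m}\,d\sigma(w)$, equivalent to a positive cubature formula on $S^{d-1}$ exact for degree $2m$, which requires of order $m^{d-1}$ nodes; hence $n$ ridge functions can reproduce radial profiles only up to ``radial frequency'' of order $n^{1/(d-1)}$, and a radial target oscillating faster is left behind at its own amplitude. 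Concretely, for a given $h'$ I would set $m\asymp n^{1/(d-1)}$ and take $f_0=g_0(\|x\|_2^2)$ with $g_0$ a radial profile oscillating at frequency $m$, normalized so that $g_0\in Lip^{(r,c_0)}_{[0,1]}$ — which forces its amplitude to be of order $m^{-r}\asymp n^{-r/(d-1)}$ — and then show, by decomposing $h'$ into spherical harmonics on the shells $\{\|x\|_2=\rho\}$ and using the cubature obstruction in a stable ($L^p$ rather than exact-representation) form, that $\|f_0-h'\|_{L^\infty(\mathbb I^d)}\geq C_2 n^{-r/(d-1)}$. The main obstacle is precisely this stable degrees-of-freedom count: a crude parameter count, or a covering-number comparison of $\mathcal H_{\{n,1,\sigma,\mathcal R\}}$ with the radial class $Lip^{(\diamond,r,c_0)}$, only yields the far weaker rate $n^{-r}$, so one genuinely has to exploit the ridge geometry on $S^{d-1}$, in the spirit of Maiorov-type lower bounds for approximation by ridge functions.
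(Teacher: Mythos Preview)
The paper does not prove this proposition; it is quoted as a result from \cite{Chui2018a} (with the lower bound ultimately resting on the ridge-function lower bounds of \cite{Konovalov2008,Konovalov2009}), so there is no in-paper proof to compare against. Your outline is in line with the route taken in that literature: the depth-$3$ construction that first manufactures $\|x\|_2^2$ and then applies a univariate sigmoidal approximant to the profile, and the lower bound obtained by confronting the ridge structure of shallow nets with radial targets via spherical-harmonic / cubature considerations in the style of Maiorov and Konovalov--Leviatan--Maiorov.

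Two points where your sketch is thinner than what an actual proof requires. First, for the upper bound, the divided-difference realization of $t\mapsto t^2$ with step $\delta$ indeed forces outer coefficients of order $\delta^{-2}$, so to stay inside $\mathcal H_{\{n,3,\sigma,\mathcal R\}}$ with a fixed $\mathcal R$ you cannot simply shrink $\delta$; the construction in \cite{Chui2018a} handles this by spreading the squaring over two layers and keeping each layer in a controlled regime, but you should make explicit which parameters stay bounded and which layer absorbs the accuracy, since the statement asserts membership in $\mathcal H_{\{n,3,\sigma,\mathcal R\}}$ rather than merely in some depth-$3$ net. Second, for the lower bound, your cubature heuristic gives the right exponent but is not itself a proof: the actual argument (see \cite{Konovalov2008,Konovalov2009}) passes through Kolmogorov widths of radial Sobolev classes and a comparison with the nonlinear ridge manifold, and the ``stable'' form you allude to is exactly what those papers supply. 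Invoking those results directly is the cleanest way to close the gap; trying to re-derive the $n^{-r/(d-1)}$ rate from scratch via an $L^\infty$ cubature obstruction would be substantially more work than your paragraph suggests.
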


Numerous  learning problems \cite{LinT2008} in computer vision, gene
analysis and speech processing involve high dimensional data. These
data are often governed by many fewer variables, producing
manifold-structure features in a high dimensional ambient space. A
large number of theoretical studies
\cite{Chui2018,Shaham2015,Ye2008} have
 revealed that shallow nets are difficult to realize   smooth
 and
manifold-structure features simultaneously. Conversely, deep nets,
as studied in \cite{Shaham2015,Chui2018},  is capable of reflecting
these features, which is shown by the following proposition \cite{Shaham2015} (see also \cite{Chui2018}).

\begin{proposition}\label{Proposition:almost optimal}
Let $\Gamma\subset\mathbb I^d$ be a smooth $d'$-dimensional compact
manifold (without boundary) with $d'\ll d$. If $\sigma$ is the ReLU
activation function, i.e. $\sigma(t)=\max\{t,0\}$, and $f$ is
defined on $\Gamma$ and twice differentiable, then there exists a
$g\in \mathcal H_{ n,4,\sigma, \mathcal R }$   such that
\begin{equation}\label{Optimal approx}
   \|f-g\|_{L^2(\mathbb I^d)}\leq C_3n^{-\frac{2}{d'}}.
\end{equation}
where  $C_3$ is a   constant  independent of $d_0,d_1,\dots,d_L$ or
$n$.
\end{proposition}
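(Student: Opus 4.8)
The plan is to reduce the global approximation problem on $\Gamma$ to a finite family of local approximation problems on $d'$-dimensional Euclidean patches, and then to realize the resulting local approximants by a ReLU net of \emph{fixed} depth (four hidden layers) with a prescribed sparse structure. First, since $\Gamma$ is a compact smooth $d'$-dimensional submanifold of $\mathbb I^d$ without boundary, it admits a finite atlas $\{(U_i,\psi_i)\}_{i=1}^{M}$ in which, after a rigid motion, $U_i\cap\Gamma$ is the graph of a smooth map over a ball $B_i$ in the tangent space $T_{c_i}\Gamma\cong\mathbb R^{d'}$; write $\pi_i:\mathbb R^d\to\mathbb R^{d'}$ for the associated affine projection onto these tangent coordinates. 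Fix a smooth partition of unity $\{\phi_i\}_{i=1}^{M}$ subordinate to $\{U_i\}$, so that $f=\sum_{i=1}^{M}f\phi_i$ on $\Gamma$, and let $f_i$ denote $f\phi_i$ expressed in the tangent coordinates of chart $i$, which is a compactly supported twice-differentiable function of $d'$ variables. The number $M$ of charts depends only on $\Gamma$, hence is an absolute constant for the purpose of the rate.

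Second, I would approximate each $f_i$ on $\mathbb R^{d'}$ by an $N$-term expansion in a compactly supported, sufficiently smooth wavelet system (or, equivalently, by continuous piecewise-linear interpolation on a simplicial mesh of size $h\asymp N^{-1/d'}$). Since $f_i\in C^2$, the classical Jackson-type estimate yields $\|f_i-\sum_{\ell=1}^{N}\alpha_{i,\ell}\,w_{i,\ell}\|_{L^2(\mathbb R^{d'})}\le c\,N^{-2/d'}$, where each $w_{i,\ell}$ is a dilate–translate of a fixed atom. Composing with $\pi_i$, summing over $i$, and restricting to $\mathbb I^d$ produces a candidate $g$ with $\|f-g\|_{L^2(\mathbb I^d)}\le C\,N^{-2/d'}$ once the chart windows are tied to $\Gamma$. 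Choosing $N\asymp n$ gives the claimed rate $C_3 n^{-2/d'}$; the dilation scales needed are $O(N^{1/d'})$, so all generating parameters stay polynomially bounded, i.e. $|w_k^{i,j}|,|b_k^i|,|a_i|\le\mathcal R$ for a suitable $\mathcal R=\mathcal R(n)$.

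Third, I would realize $g$ by a ReLU net with four hidden layers and a fixed structure. The first hidden layer computes all affine projections $\pi_i$ — a linear map is reproduced exactly by ReLU via $t=\sigma(t)-\sigma(-t)$ — together with the one-dimensional coordinates used to localize to each chart. The intermediate layers build one-dimensional ``trapezoid'' windows as linear combinations of shifted ReLUs, $\mathrm{trap}(t)=\sigma(t-a)-\sigma(t-b)-\sigma(t-c)+\sigma(t-d)$, and then assemble the $d'$-dimensional localizing windows and atoms from them, using identities such as $\min\{u,v\}=u-\sigma(u-v)$ so that minima (or products) of a fixed number $d'$ of bounded one-dimensional pieces cost only $O(1)$ extra layers. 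The output layer forms $\vec a\cdot\vec h_L=\sum_{i,\ell}\alpha_{i,\ell}(\cdots)$. The nonzero pattern of every $W_k$, $\vec b_k$ and $\vec a$ is fixed independently of $f$, the matrices are sparse, and the number of free parameters is $n\asymp M\cdot N\asymp N\ll\mathcal A_4$; hence $g\in\mathcal H_{\{n,4,\sigma,\mathcal R\}}$, which is what we need.

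The step that requires the most care — and the technical heart of the argument — is fitting this multi-scale, multi-chart construction into a fixed depth, uniformly in the ambient dimension $d$. The $d'$-dimensional localizing windows and atoms are intrinsically products (or minima) of $d'$ one-dimensional pieces and are not computable by a shallow ReLU net, so one must spend the available extra layers on these combinations while simultaneously (i) keeping the partition of unity exact on $\Gamma$, so that the per-chart errors merely add rather than accumulate; (ii) controlling the total parameter count at $O(n)$ and the magnitudes within $\mathcal R=\mathcal R(n)$; and (iii) ensuring the $L^2$ error is measured over all of $\mathbb I^d$, not just in a tube around $\Gamma$. Verifying that exactly four hidden layers suffice for the projections, the ramps, the trapezoids, the $d'$-fold combination and the final linear read-out, with a single fixed sparse architecture, is precisely the delicate point established in the cited works.
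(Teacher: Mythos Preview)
The paper does not supply a proof of this proposition at all: it is quoted from \cite{Shaham2015} (see also \cite{Chui2018}) as part of the literature review in Section~\ref{Sec.Advantage}, and no argument is given beyond the citation. Your outline---atlas with finitely many charts, partition of unity, local $C^2$ approximation in the $d'$-dimensional tangent coordinates at rate $N^{-2/d'}$, and a fixed-depth ReLU realization of the chart projections, trapezoid windows, and their $d'$-fold combinations---is precisely the construction carried out in \cite{Shaham2015}, so in substance you have reconstructed the cited proof rather than an alternative to anything in the present paper.

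One point deserves tightening. As you yourself flag in your last paragraph, the delicate issue is that a genuine product (or $\min$) of $d'$ bounded ReLU pieces is not available in $O(1)$ ReLU layers uniformly in $d'$; the identity $\min\{u,v\}=u-\sigma(u-v)$ handles only two arguments at a time, so iterating it costs depth proportional to $d'$. The way \cite{Shaham2015} keeps the depth fixed at four is not by forming exact products/minima, but by using specific ``sum-then-rectify'' constructions (rectified-linear wavelet frame elements) in which the localized bump in $\mathbb R^{d'}$ is obtained from a \emph{sum} of $O(d')$ one-dimensional ramps followed by a single rectification, rather than from an iterated pairwise combination. Your sketch should be adjusted to use that device; once it is, the parameter count, the magnitude bound $\mathcal R$, and the $L^2$ control over $\mathbb I^d$ go through as you describe.
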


The previous studies showed that, compared with shallow nets, deep nets
equipped with fewer parameters are enough to approximate functions
with complex features to the same accuracy. In the following
Table I, we list some literature on studying the advantages of
realizing data futures.
\begin{table}[h]
\begin{center}
\begin{tabular}{|l|l|l|l|l|}
\hline References & Features & $\sigma $& $L$\\
\hline \cite{Chui1994,Chui2018} & Localized approximation  & Sigmoidal & 2\\
\hline \cite{Lin2018} & Sparse+Smooth & Sigmoidal& 2  \\
\hline \cite{Shaham2015} & Smooth+Manifold & ReLU & 4 \\
\hline \cite{Mhaskar2016,Kohler2017} &Hierarchical+Smooth  &Sigmoidal & Hierarchical \\
\hline \cite{Petersen2017} &piecewise smooth & ReLU & Finite \\
\hline \cite{Safran2016} & $\ell_1$ radial+smooth & ReLU & $\log(\varepsilon^{-1})$   \\
\hline \cite{LinH2017,Rolnick2017} & Sparse (frequency) & Analytic & $\log(\varepsilon^{-1})$   \\
\hline
\end{tabular}%
\end{center}
\caption{Powers of deep nets in approximation (within accuracy
$\varepsilon$)}
\end{table}

\subsection{Covering number estimates}

In the above subsection, we have reviewed some results on the
advantages of deep nets in realizing data features. However, it does
not mean that deep nets are better than shallow nets, since we do
not know what price is paid for such advantages in approximation.
In this subsection,  we use the covering number, which is widely
used in learning theory
\cite{Lin2017,Shi2011,Shi2013,Zhou2002,Zhou2003}, to measure the
capacity of $\mathcal H_{n,L,\sigma, \mathcal R }$ and then unify
the comparison within the same framework to show the outperformance
of deep nets.

 Let $\mathbb B$ be a Banach
space and $V$ be  a subset of $\mathbb B$. Denote by $\mathcal
N(\varepsilon,V,\mathbb B)$ the $\varepsilon$-covering number
 of $V$ under the metric of $\mathbb B$, which is
the minimal number of elements in an $\varepsilon$-net of $V$. If
$\mathbb B=L_1(\mathbb I^d)$, we denote $\mathcal
N(\varepsilon,V):=\mathcal N(\varepsilon,V,L_1(\mathbb I^d))$ for
brevity. Our purpose is a tight   bound  for  covering numbers of
$\mathcal H_{n,L,\sigma,\ \mathcal R }$. To this end, we need the
following assumption.
\begin{assumption}\label{Assumption:activation}
 For arbitrary $t\in\mathbb R$ and every $k\in\{1,\dots,L\}$, assume
\begin{equation}\label{Lip for phi}
     |\sigma_k(t)-\sigma_k(t')|\leq c_1|t-t'|
\end{equation}
and
\begin{equation}\label{bound for phi}
      |\sigma_k(t)|\leq c(|t|+1)
\end{equation}
for some $c,c_1\geq1$.
\end{assumption}

To be detailed,  (\ref{Lip for phi}) shows the Lipchitz continuous
property of $\sigma_k$ and (\ref{bound for phi}) exhibits the
linear increasing condition of $\sigma_k$. These assumptions have been utilized in
\cite{Makovoz1996,Kurkova2007,Lin2018} to quantify  covering
numbers of neural networks with different structures. We can see
that almost all widely used activation functions such as the
logistic function, hyperbolic tangent sigmoidal function
$
         \sigma(t)=\frac12(\tanh(t)+1)
$ with $\tanh(t)=(e^{2t}-1)/(e^{2t}+1)$,   arctan sigmoidal function
$
         \sigma(t)=\frac1{\pi}\arctan(t)+\frac12,
$
 Gompertz function
$
        \sigma(t)=e^{-a e^{-bt}}
$ with $a,b>0$, ReLU $\sigma(t)=\max\{t,0\}$,  and Gaussian function
$\sigma(t)=e^{-t^2}$ satisfy  Assumption
\ref{Assumption:activation}. With this assumption, we present our
first main result in the following theorem, whose proof can be found
in Appendix A.

\begin{theorem}\label{Theorem:covering number}
Let $\mathcal H_{n,L,\sigma, \mathcal R}$ be defined by
(\ref{Hypothesis space}). Under Assumption
\ref{Assumption:activation}, there holds
\begin{eqnarray*}
  \mathcal N\left( \varepsilon,\mathcal H_{n,L,\sigma,\mathcal
       R\}}\right)
     \leq
   \left(c_2\mathcal RD_{\max}\right)^{3(L+1)^2n}\varepsilon^{-
   n},
\end{eqnarray*}
where $D_{\max}:=\max_{0\leq\ell\leq L}d_\ell$ and $c_2\geq 1$ is a
constant depending only on $c,c_1$ and $d$.
\end{theorem}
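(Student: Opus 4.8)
The plan is to reduce the covering number of $\mathcal H_{\{n,L,\sigma,\mathcal R\}}$ to that of the $n$-dimensional parameter cube $[-\mathcal R,\mathcal R]^n$ by showing that the map sending a parameter vector to its deep net is Lipschitz, with a controlled constant, from $\ell_\infty$ on parameters into $L_\infty(\mathbb I^d)$, and then discretizing the cube. Since $\|f\|_{L_1(\mathbb I^d)}\le 2^d\|f\|_{L_\infty(\mathbb I^d)}$, it suffices to build an $(\varepsilon 2^{-d})$-net in the sup-norm and absorb the factor $2^d$ into the constant $c_2$, which is permitted to depend on $d$. Two ingredients are needed: a uniform bound on the network outputs, and a quantitative estimate of how a parameter perturbation propagates through the $L$ layers.

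First I would establish the uniform bound. Writing $\mathcal B_k:=\sup_{x\in\mathbb I^d}\|\vec h_k(x)\|_\infty$ for the layer vectors in (\ref{Def:layer vector}), the linear-growth condition (\ref{bound for phi}) together with $|W_k^{i,j}|,|b_k^{i}|\le\mathcal R$ and $\|\vec h_0(x)\|_\infty\le 1$ gives the recursion $\mathcal B_k\le c(\mathcal R d_{k-1}\mathcal B_{k-1}+\mathcal R+1)$, hence $\mathcal B_k\le (c'\mathcal R D_{\max})^{k}$ with $c'$ depending only on $c$ and $d$. Consequently every $h\in\mathcal H_{\{n,L,\sigma,\mathcal R\}}$ satisfies $\|h\|_{L_\infty(\mathbb I^d)}\le d_L\mathcal R\,\mathcal B_L\le (c'\mathcal R D_{\max})^{L+1}$.

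Next comes the key step, the Lipschitz estimate in the parameters. Let $h,\tilde h$ share the same fixed structure and have corresponding free parameters differing by at most $\delta$. Because each entry of a structured weight matrix is either fixed or equal to a single free parameter (this covers all three admissible ways of generating $W_k$ from $\mathcal F_{k,w}$ free parameters, including the Toeplitz and tree cases), perturbing the free parameters by $\delta$ perturbs every entry of every $W_k$ by at most $\delta$, so it is the count $n$ in (\ref{papr for spasre}) — not the number of matrix entries — that will enter the final bound. Setting $e_k:=\sup_x\|\vec h_k(x)-\tilde{\vec h}_k(x)\|_\infty$ and splitting $W_k\vec h_{k-1}-\tilde W_k\tilde{\vec h}_{k-1}=W_k(\vec h_{k-1}-\tilde{\vec h}_{k-1})+(W_k-\tilde W_k)\tilde{\vec h}_{k-1}$, the Lipschitz condition (\ref{Lip for phi}) yields $e_k\le c_1(\mathcal R d_{k-1}e_{k-1}+d_{k-1}\mathcal B_{k-1}\delta+\delta)$ with $e_0=0$. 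Unrolling this recursion, inserting the bound $\mathcal B_{k-1}\le(c'\mathcal R D_{\max})^{k-1}$, and finally adding the output-layer contribution of $\vec a$ gives $\|h-\tilde h\|_{L_\infty(\mathbb I^d)}\le\Lambda\,\delta$; a careful telescoping over layers in fact gives $\log\Lambda$ only linear in $L$ (up to a log-polynomial in $L,D_{\max}$), but even the wasteful estimate that replaces the sum over layers by the product $\prod_k\mathcal B_k$ stays inside the stated exponent, so no sharpness is required here.

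Finally I would discretize: cover $[-\mathcal R,\mathcal R]^n$ by a grid of spacing $\delta:=\varepsilon\,2^{-d}\Lambda^{-1}$, which needs at most $(3\mathcal R/\delta)^n$ points (we may assume $\varepsilon$ small, since otherwise the bound is trivial), each of which is again an element of $\mathcal H_{\{n,L,\sigma,\mathcal R\}}$; by the Lipschitz estimate the associated networks form an $\varepsilon$-net in $L_1(\mathbb I^d)$. Substituting $\delta$ and $\Lambda$ and collecting the constants $c,c_1,d$, the factor $2^d$, and the polynomial-in-$L,D_{\max}$ prefactors — all of which fit within the slack between the growth actually obtained and the stated $3(L+1)^2n$ — yields $\mathcal N(\varepsilon,\mathcal H_{\{n,L,\sigma,\mathcal R\}})\le(c_2\mathcal R D_{\max})^{3(L+1)^2 n}\varepsilon^{-n}$. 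The main obstacle I expect is the bookkeeping in the Lipschitz step: one must track simultaneously the forward amplification of a perturbation entering at layer $k$ (a factor of order $(\mathcal R D_{\max})^{L-k}$) and the size $\mathcal B_{k-1}$ of the activations it multiplies, and check that the structured and shared-weight parametrizations do not spoil the per-entry perturbation bound.
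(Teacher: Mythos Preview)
Your proposal is correct and takes a genuinely different route from the paper. The paper builds the cover layer by layer in function space: it first covers the weight matrices $\mathcal W_{\mathcal F_{\ell,w}}$ and bias sets $\vec{\mathcal B}_{\mathcal F_{\ell,b}}$ separately (Lemma~\ref{Lemma:covering for matrix}), then derives an iterative relation between $\mathcal N(\varepsilon,\vec{\mathcal H}_\ell,\|\cdot\|_{*,d_\ell})$ and $\mathcal N(\varepsilon',\vec{\mathcal H}_{\ell-1},\|\cdot\|_{*,d_{\ell-1}})$ (Lemma~\ref{Lemma:Iterated covering number}), all in $L_1$. Unrolling that recursion is where their quadratic exponent $3(L+1)^2n$ comes from: the scale at layer $\ell$ carries a factor $B_\ell$ of order $\mathcal R^{\ell}D_{\max}^{O(\ell)}$, and these accumulate multiplicatively over the chain. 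Your approach instead fixes the structure, views the network as a single map from the parameter cube $[-\mathcal R,\mathcal R]^n$ into $L_\infty(\mathbb I^d)$, bounds its global Lipschitz constant $\Lambda$ by the layer recursion $e_k\le c_1(\mathcal R d_{k-1}e_{k-1}+d_{k-1}\mathcal B_{k-1}\delta+\delta)$, and then discretizes the cube once. Because the forward amplification $(\mathcal R D_{\max})^{L-k}$ and the activation size $\mathcal B_{k-1}\sim(\mathcal R D_{\max})^{k-1}$ combine to a factor essentially independent of $k$, your $\log\Lambda$ is indeed only linear in $L$; this actually recovers, for general Lipschitz activations, the $O(Ln\log(\mathcal R D_{\max}/\varepsilon))$ log-covering bound that the paper only quotes (equation~(\ref{covering for DFCN for linear})) for piecewise linear $\sigma$ via VC-dimension. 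So your argument is both more elementary and sharper; the paper's layer-wise scheme is more modular and would adapt readily if one wanted covers of the intermediate feature spaces $\vec{\mathcal H}_\ell$ themselves.
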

For $\sigma$ satisfying Assumption \ref{Assumption:activation},  it
was deduced in \cite{Maiorov2006,Gyorfi2002} that
\begin{equation}\label{covering for shallow}
        \log\mathcal N(\varepsilon,\mathcal H_{n,1,\sigma, \mathcal
     R})=\mathcal O\left(n
        \log\frac{C_4\mathcal R}{\varepsilon}\right),
\end{equation}
where $C_4$ is a constant independent of $\varepsilon$  or $n$. From
Theorem \ref{Theorem:covering number}, we can derive
\begin{equation}\label{covering for DFCN}
        \log\mathcal N(\varepsilon,\mathcal H_{n,L,\sigma, \mathcal
     R})=\mathcal O\left(L^2n
        \log\frac{C_5\mathcal R}{\varepsilon}\right)
\end{equation}
for some $C_5$ independent of $\varepsilon$, $L$,
$d_0,d_1,\dots,d_L$ or $n$. Comparing (\ref{covering for DFCN}) with
(\ref{covering for shallow}), we find that, up to a logarithmic
factor, deep nets do not essentially enlarge the capacity of shallow
nets, provided that they have same number of free parameters and the
depth of deep nets is at most $\log n$. Noting that the depths of
deep nets in Table I all satisfy this constraint, Theorem
\ref{Theorem:covering number} shows that to realize various data
features presented in Table I, deep nets can improve the performance
of shallow nets without imposing additional capacity costs. Therefore,
 Theorem \ref{Theorem:covering number} together with Table I yields
the reason why deep nets perform much better than shallow nets in
some complex learning tasks such as image processing and computer
vision.

Recently, \cite{Harvey2017} presented a tight VC-dimension bounds
for piecewise linear neural networks.  In particular, they proved
that
\begin{equation}\label{VCdimension for deep}
        VCDim(sgn(\mathcal H_{\{d_0,\dots,d_L,\sigma\}}))
        =
        \mathcal O(L n\log n),
\end{equation}
where $VCDim(V)$ denotes the VC-dimension of the set $V$ and
$sgn(V):=\{x\rightarrow sgn(f(x)):f\in V\}$, where $sgn(f(x))=1$ if $f(x)\geq 0$ and $sgn(f(x))=-1$ otherwise. Using the standard
approach in \cite[Chap.9]{Gyorfi2002}, we can derive
\begin{equation}\label{covering for DFCN for linear}
        \log\mathcal N(\varepsilon,\mathcal H_{n,L,\sigma, \mathcal
     R })=\mathcal O\left(Ln
        \log\frac{C_5\mathcal R}{\varepsilon}\right)
\end{equation}
provided that $\sigma_1=\sigma_2=\cdots=\sigma_L$ are piecewise linear,
where $C_6$ is a constant independent of
$\varepsilon,L,d_0,\dots,d_L$ or $n$. Comparing (\ref{covering for
DFCN for linear}), there is an additional $L$ in our analysis. The
reason is that we focus on all activation functions satisfying
(\ref{Assumption:activation}) rather than piecewise activation
functions. It should be also mentioned that similar covering number
estimates for deep nets with tree structures has been studied in
\cite{Chui2018a,Kohler2017,Lin2018}. We highlight that   different
structures yield essentially non-trivial approaches. In fact, due to
tree structures, the approach in \cite{Chui2018a,Kohler2017,Lin2018}
is just to decouple layers by using the boundedness and Liptchiz
property of activation functions. However, in estimating covering
number of deep nets with arbitrarily fixed structure, we need a
novel matrix-vector transformation technique, as presented in
Appendix A.

\section{Necessity of the Depth}\label{Sec.Main results}

Previous studies showed that, to realize some complex data features,
deep nets can improve the performance of shallow nets without
additional capacity costs. In this section, we study in a
different direction to prove that, to realize some simple data
features, deep nets are  not essentially better than shallow nets.

\subsection{Limitations of deep nets approximation}

Smoothness or regularity is a widely used feature that has been
adopted in a vast literature
\cite{Chui1994,Chui1996,Konovalov2008,Konovalov2009,Maiorov1999b,Maiorov1999c,Yarotsky2017}.
To present the approximation result, we at first introduce the
following definition.
\begin{definition}\label{definition:lip for d}
Let $c_0>0$ and $r=s+v$ with $s\in\mathbb N_0:=\{0\}\cup\mathbb N$
and $0<v\leq 1$. We say a   function $f:\mathbb
I^d\rightarrow\mathbb R$ is $(r,c_0)$-smooth if $f$ is $s$-times
differentiable and for every $\alpha_j\in \mathbb N_0$,
$j=1,\dots,d$ with $\alpha_1+\dots+\alpha_d=s$, its $s$-th partial
derivative satisfies the Lipschitz condition
\begin{equation}\label{lip}
          \left|\frac{\partial^sf}{\partial x_1^{\alpha_1}\dots\partial
          x_d^{\alpha_d}}
          (x)-\frac{\partial^sf}{\partial x_1^{\alpha_1}\dots\partial
          x_d^{\alpha_d}}
          (x')\right|\leq c_0\|x-x'\|^v,
\end{equation}
where $x,x'\in\mathbb I^d$ and $\|x\|$ denotes the Euclidean norm of
  $x\in\mathbb R^d$.  Denote by $Lip^{(r,c_0)}$ the set of all
$(r,c_0)$-smooth functions defined on $\mathbb I^d$.
\end{definition}

Approximating smooth functions  is a classical  topic in neural
networks approximation. It is well known   that   the approximation
rate can be as fast as $\mathcal O(n^{-r/d})$ for neural networks
with $n$ free parameters. In particular, the Jackson-type error
estimate
\begin{equation}\label{upper bound shallow}
    \mbox{dist}(Lip^{(r,c_0)},\mathcal H_{\{n,1,\sigma,\mathcal R},L_p(\mathbb I^d))\leq
   C_1'n^{-\frac{r}{d}}
\end{equation}
has been established \cite{Maiorov2000} for shallow nets with
analytic activation functions, where
\begin{eqnarray*}
     \mbox{dist}(U,V, L_p(\mathbb I^d))
     &:=&\sup_{f\in U} \mbox{dist}(f,V, L_p(\mathbb I^d))\\
     &:=&\sup_{f\in U} \inf_{g\in V}\|f-g\|_{L_p(\mathbb
       I^d)}
\end{eqnarray*}
denotes the deviations of $U$ from $V$ in  $L_p(\mathbb I^d)$ for
$U,V\subseteq L_p(\mathbb I^d)$.
 Similar results has been derived in
\cite{Chui1994} with deep nets with two hidden layers and a sigmoidal
activation function. Recently, \cite{Yarotsky2017} derived an error
estimate taking the form of
\begin{equation}\label{upper bound relu}
    \mbox{dist}(Lip^{(r,c_0)},\mathcal H_{n,L,\sigma ,\mathcal R },L_p(\mathbb I^d))\leq
   C_2'n^{-\frac{r}{d}}\log n
\end{equation}
for deep nets with $L=\log n$ and  ReLU activation functions. We would like to point out that,
for shallow nets with ReLU activation functions, estimates (\ref{upper bound relu}) holds only for $0<r\leq 1$,
which is also considered as the approximation bottleneck of shallow nets. The paper
\cite{Yarotsky2017} showed that deepening the networks can
overcome this bottleneck for shallow nets. However, it should be mentioned from
  (\ref{upper bound shallow}) that for other activation functions except the ReLU activation functions,
such a bottleneck does not exist. Thus, the paper \cite{Yarotsky2017} indeed
conduct a nice analysis on the necessity of deepening ReLU nets. However, their established results can not illustrate the necessity of depth.

In the following theorem that will be proved in Appendix C, we show that
deep nets cannot be essentially better than shallow nets in realizing the
smoothness feature.

\begin{theorem}\label{Theorem:lower bound for deep nets}
Let $1\leq p\leq \infty$, $L\in \mathbb N$. Then
\begin{eqnarray}\label{lower bound deep nets in theorem}
  &&\mbox{dist}(Lip^{(r,c_0)},\mathcal
     H_{n,L,\sigma,\mathcal
     R},L_1(\mathbb I^d))   \nonumber \\
   &\geq&
   C[L^2n\log_2n \log_2(\mathcal R
       D_{\max})]^{-\frac{r}d},
\end{eqnarray}
where $C$ is a constant depending only on $c,c_0,c_1,d$ and $r$.
\end{theorem}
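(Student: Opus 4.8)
The plan is to derive the lower bound from the standard duality between approximation capability and metric entropy: Theorem~\ref{Theorem:covering number} controls the $L_1$-entropy of $\mathcal H_{n,L,\sigma,\mathcal R}$ from above, a classical packing construction bounds the $L_1$-entropy of $Lip^{(r,c_0)}$ from below, and matching the two yields the rate. Since $\|\cdot\|_{L_1(\mathbb I^d)}$ is the weakest of the $L_p(\mathbb I^d)$-norms, it suffices to argue in $L_1$. For the reduction, suppose $\mathrm{dist}(Lip^{(r,c_0)},\mathcal H_{n,L,\sigma,\mathcal R},L_1(\mathbb I^d))\le\varepsilon$. Then every $f\in Lip^{(r,c_0)}$ admits some $h\in\mathcal H_{n,L,\sigma,\mathcal R}$ with $\|f-h\|_{L_1}\le 2\varepsilon$; combining this with an $\varepsilon$-net of $\mathcal H_{n,L,\sigma,\mathcal R}$ in $L_1(\mathbb I^d)$ produces a $3\varepsilon$-net of $Lip^{(r,c_0)}$, whence
\[
\mathcal N\bigl(3\varepsilon,Lip^{(r,c_0)}\bigr)\le\mathcal N\bigl(\varepsilon,\mathcal H_{n,L,\sigma,\mathcal R}\bigr)\le\exp\!\bigl(3(L+1)^2n\log(c_2\mathcal RD_{\max})+n\log(1/\varepsilon)\bigr),
\]
the last step being Theorem~\ref{Theorem:covering number}.

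Next I would establish the classical lower bound $\log\mathcal N(\varepsilon,Lip^{(r,c_0)})\ge c_3\varepsilon^{-d/r}$ for $0<\varepsilon\le\varepsilon_0$, with $c_3,\varepsilon_0$ depending only on $d,r,c_0$. This comes from the usual bump-and-sign packing: split $\mathbb I^d$ into roughly $m^d$ congruent subcubes of side comparable to $1/m$, and on each put a translated, rescaled copy of a fixed $C^\infty$ bump supported in the interior of the unit cube, with amplitude comparable to $m^{-r}$, chosen so that every $\pm1$-combination of these disjointly supported bumps still lies in $Lip^{(r,c_0)}$. Two sign patterns differing on $k$ subcubes then differ in $L_1(\mathbb I^d)$ by an amount comparable to $k\,m^{-(d+r)}$; a Gilbert--Varshamov selection extracts $2^{cm^d}$ patterns with pairwise Hamming distance at least $m^d/4$, hence pairwise $L_1$-distance at least a fixed constant times $m^{-r}$. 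This is a packing of $Lip^{(r,c_0)}$ of cardinality $2^{cm^d}$ at scale $\asymp m^{-r}$, so setting $m\asymp\varepsilon^{-1/r}$ gives the claim.

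Combining the two, whenever $\varepsilon\le\varepsilon_0/3$ and the distance is $\le\varepsilon$ we must have $c_3\,3^{-d/r}\varepsilon^{-d/r}\le 3(L+1)^2n\log(c_2\mathcal RD_{\max})+n\log(1/\varepsilon)$. I would then substitute $\varepsilon=C\bigl[L^2n\log_2n\,\log_2(\mathcal RD_{\max})\bigr]^{-r/d}=:CB^{-r/d}$. With this choice the left side is $c_3\,3^{-d/r}C^{-d/r}B$; the first term on the right is $O\!\bigl(L^2n\log(\mathcal RD_{\max})\bigr)=O(B/\log_2n)$, smaller than the left side by a factor $\log_2n$; and since $B\ge n$ one has $n\log(1/\varepsilon)\lesssim n\log B\lesssim B$, so this term too is dominated. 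Taking $C$ small enough — depending only on $c,c_0,c_1,d,r$ through $c_2,c_3,\varepsilon_0$ and the exponents — forces the left side to strictly exceed the right side, a contradiction; hence the distance is $>\varepsilon$, which is the asserted inequality. Shrinking $C$ once more if necessary ensures the chosen $\varepsilon$ always lies in $(0,\varepsilon_0/3]$.

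The genuinely new ingredient, the refined covering-number bound, is already in hand (Theorem~\ref{Theorem:covering number}, Appendix~A), so the remaining difficulty is arithmetic rather than conceptual. The point that most needs care is verifying that the term $n\log(1/\varepsilon)$ stays dominated by the main term for \emph{all} admissible $L,\mathcal R,D_{\max},n$ — not only in the benign regime $L\le\log n$, $\mathcal R=\mathrm{poly}(n)$ highlighted in Section~\ref{Sec.Advantage} — which rests on the elementary observation $B\ge n\log_2n$ forcing $n\log B\lesssim B$. The only other technical point is keeping the amplitude and separation constants of the bump packing uniform in the dimension $d$, which is routine but must be tracked explicitly to get a constant $C$ of the claimed form.
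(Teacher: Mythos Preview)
Your proposal is correct and follows essentially the same route as the paper. The paper abstracts the entropy-comparison step into a standalone result (Theorem~\ref{theorem:Relation c and l} in Appendix~B), proved via exactly the bump-and-sign packing with a Gilbert--Varshamov selection that you describe (their Lemmas~\ref{Lemma: Tools for vc}--\ref{Lemma:distance}), and then plugs in Theorem~\ref{Theorem:covering number}; your inline comparison of $\mathcal N(3\varepsilon,Lip^{(r,c_0)})$ against $\mathcal N(\varepsilon,\mathcal H_{n,L,\sigma,\mathcal R})$ is the same argument without the intermediate packaging, and your balancing of the $n\log(1/\varepsilon)$ term via $B\ge n\log_2 n$ matches their arithmetic in the final step.
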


Combining the estimates (\ref{lower bound deep nets
in theorem}) and (\ref{upper bound shallow}),  and noting 
$$
     \|f\|_{L^1(\mathbb I^d)}\leq C_{d,p}\|f\|_{L^p(\mathbb I^d)} 
$$
with $C_{d,p}$ a constant depending only on $d$ and $p$, 
we see that, when $L$ is not
too large, deep nets cannot essentially improve the approximation rate if one
only considers the smoothness feature. When $L$ is too large, it follows from Theorem
\ref{Theorem:covering number} that we will need additional capacity cost for deep nets
to improve the approximation ability of shallow nets. In other
words, the smoothness feature is not sufficient to judge whether the
depth of neural networks is necessary.

\subsection{Remarks and discussions}

Limitations of the approximation capabilities of shallow nets
  were firstly studied in \cite{Chui1996} in terms of
providing lower bounds of approximation of smooth functions in the
minimax sense. Recently, \cite{Lin2017a} highlighted that there
exists a probabilistic measure, under which, all smooth functions
cannot be approximated by shallow nets very well with  high
confidence.  In another two interesting papers
\cite{Kurkova2017,Kurkova2018}, limitations of shallow nets were
presented in terms of establishing lower bound of approximating
functions with some variation restrictions. However, due to these results,
it is still not clear whether the depth of neural networks is necessary, if
only the smoothness information is given.

Theorem \ref{Theorem:lower bound for deep nets} goes further along this
direction and presents a negative answer. In Theorem
\ref{Theorem:lower bound for deep nets}, to realize smoothness
features, deep nets perform almost the same as shallow nets. This
result verifies the common consensus  that deep learning outperforms
shallow learning in some ``difficult'' learning tasks
\cite{Goodfellow2016}, but not always. Moreover, our
result also implies that whether deep nets can help to improve the performance
of the existing learning schemes depends on what features for data we are exploring.
Combing our work with
\cite{Mhaskar2016,Mhaskar2016a,Kohler2017,Petersen2017,McCane2017,Chui2018,Chui2018a,Lin2018},
we can illustrate the comparison between shallow and deep nets in
Figure 2.
\begin{figure}
\begin{minipage}[b]{.45\linewidth}
\centering
\includegraphics*[scale=0.15]{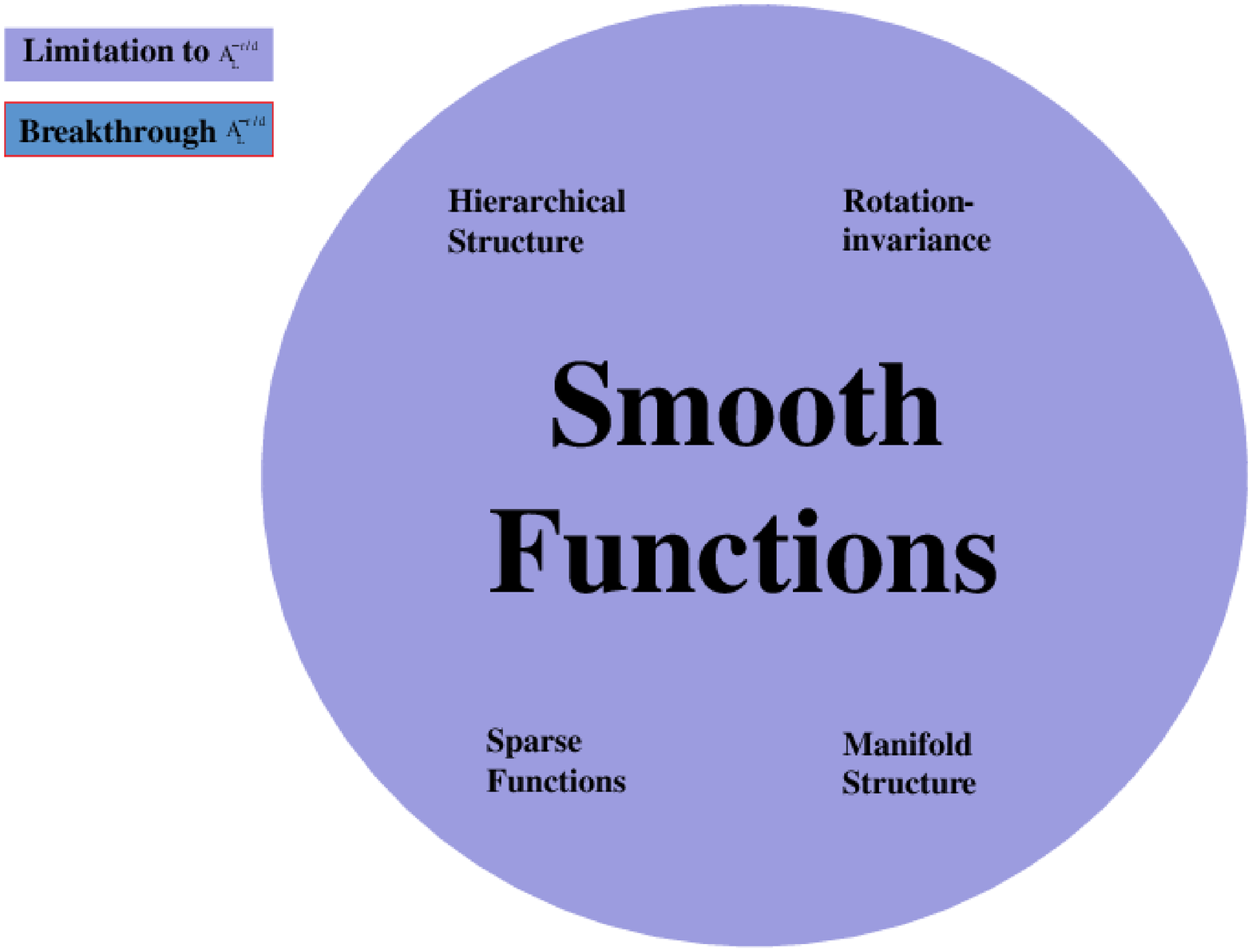}
\centerline{{\small (a) Approximation by shallow nets}}
\end{minipage}
\hfill
\begin{minipage}[b]{.45\linewidth}
\centering
\includegraphics*[scale=0.15]{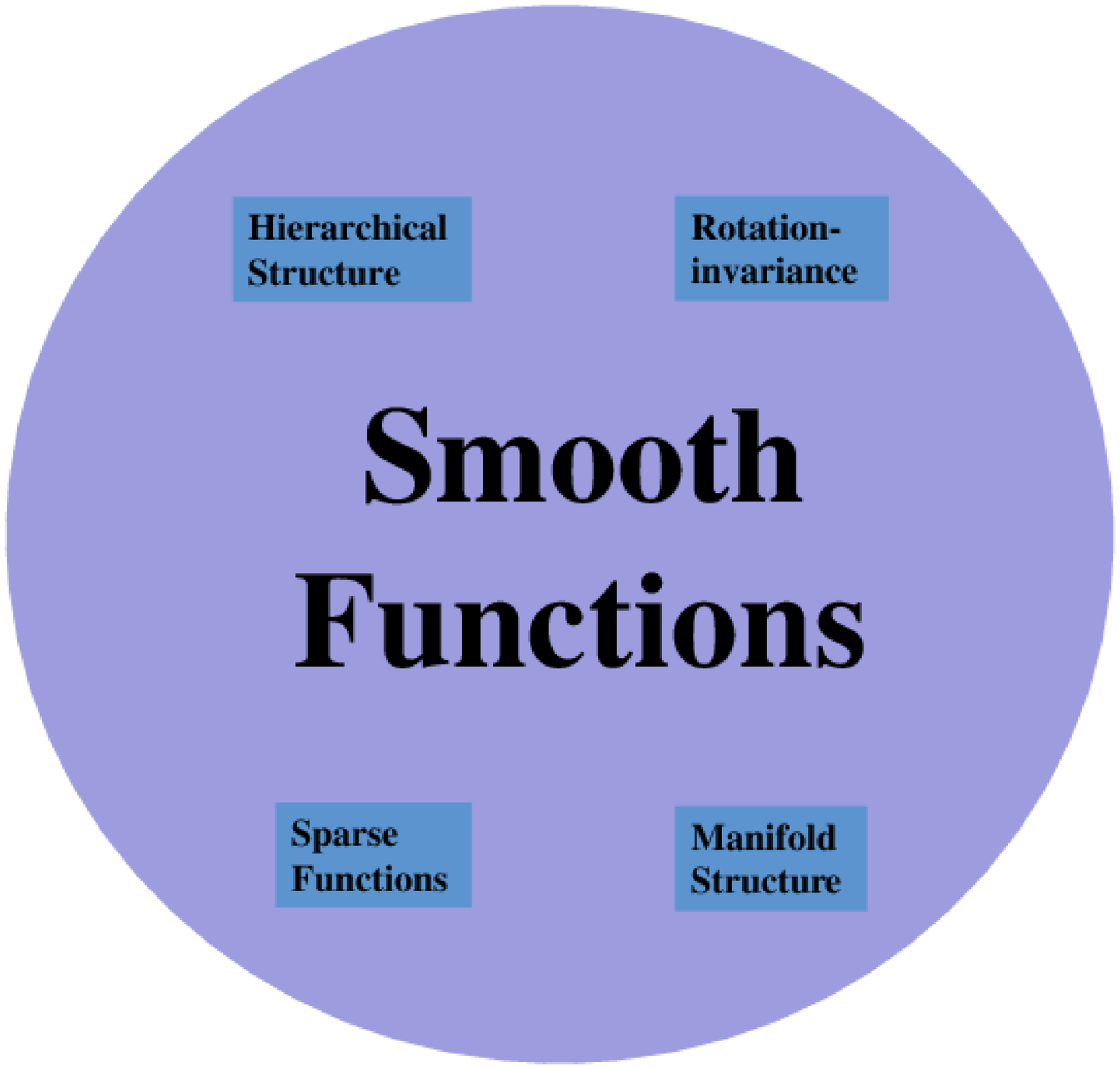}
\centerline{{\small (b) Approximation by DFCNs}}
\end{minipage}
\hfill \caption{Comparison between deep and shallow nets
}\label{Fig:com}
\end{figure}

We declare that Theorem \ref{Theorem:lower bound for deep nets} only
presents limitations of  deep nets in realizing smooth features. As
shown in Figure 2,  if more features are  explored, we believe that
the approximation rate of deep nets can break through the lower bound
presented in (\ref{lower bound deep nets in theorem}).

\section{Conclusion}

In this paper, we study the advantages and limitations of deep nets
in realizing different data features. Our results showed that, in
realizing some complex data features such as the
rotation-invariance, manifold structure, hierarchical structure,
sparseness, deep nets can improve the performance of shallow nets without additional capacity costs. We also exhibit that for
some simple data features like the smoothness, deep nets performs
essentially similar as shallow nets.

\section*{Appendix A: Proof of Theorem \ref{Theorem:covering
number}}

For $1\leq \ell \leq L$, let  $\mathcal W^*_{\mathcal F_{\ell,w}}$
be the set of $d_\ell\times d_{\ell-1}$ matrices with fixed
structures and total $\mathcal F_{\ell,w}$ free parameters and
$\vec{B}^*_{\mathcal F_{\ell,b}}$ be the set of $\mathcal
F_{\ell,b}$-dimensional vectors with fixed structures and total
$\mathcal F_{\ell,b}$ free parameters. Denote
$$
      \mathcal W_{\mathcal
F_{\ell,w}}:=\{W\in \mathcal W^*_{\mathcal F_{\ell,w}}:
|W^{i,j}|\leq \mathcal R\}
$$
and
 $$
 \vec{\mathcal B}_{\mathcal F_{\ell,b}}:=\{\vec{b}\in
\vec{B}^*_{\mathcal F_{\ell,b}}: |b_{i}|\leq \mathcal R\}.
$$
For $x\in\mathbb I^d$, let  $\vec{\mathcal H}_{0}=\{x\}$ and define
iteratively
\begin{eqnarray}
       &&\vec{\mathcal H}_{\ell}
        =
      \{\vec{h}_{\ell}(x)=\vec{\sigma_\ell}(W_{\ell}\vec{h}_{\ell-1}(x)+\vec{b}_\ell):\\
      &&\vec{h}_{\ell-1}\in\vec{\mathcal
     H}_{\ell-1}, W_{\ell}\in \mathcal W_{\mathcal F_{\ell,w}}, \vec{b}_\ell\in\vec{\mathcal B}_{\mathcal
     F_{\ell,b}}\},\quad \ell=1,2,\dots,L. \nonumber
\end{eqnarray}
For each
$\vec{h}_\ell=(h_\ell^1,\dots,h_\ell^{d_\ell})\in\vec{\mathcal
H}_\ell$, define $\|\vec{h}_\ell\|_{*,d_\ell}:=\max_{1\leq i\leq
d_\ell}\|h_\ell^i\|_{L^1({\mathbb I}^d)}$.
  The following lemma  devotes to the uniform bound
of functional vectors in $\vec{\mathcal H}_{\ell}$. In our analysis, we always assume that the activation functions satisfy
Assumption \ref{Assumption:activation} with uniform constants $c$ and $c_1$. Moreover, we also suppose that $\mathcal R \geq 1$ and
$c\geq 1$.

\begin{lemma}\label{Lemma:bounds}
For each $\ell=1,2,\dots,L$  and $\vec{h}_\ell\in\vec{\mathcal
H}_{\ell}$, if $\sigma_\ell$ satisfies (\ref{bound for phi}), then
there holds
\begin{equation}\label{Uniform bound}
  \|\vec{h}_\ell\|_{*,d_\ell}
  \leq
  \left(c(1+2^{d+1})\mathcal R\right)^\ell d_{\ell-1}\cdots
      d_0.
\end{equation}
\end{lemma}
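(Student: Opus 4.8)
The plan is to prove \eqref{Uniform bound} by induction on $\ell$, tracking how a single layer transforms a bound on the $L^1(\mathbb I^d)$-norms of the components of $\vec h_{\ell-1}$ into one for $\vec h_\ell$. The recursion is seeded at level $0$: since $\vec h_0(x)=x$, each component satisfies $\|h_0^i\|_{L^1(\mathbb I^d)}=\int_{\mathbb I^d}|x^{(i)}|\,dx=2^{d-1}$, so $\|\vec h_0\|_{*,d_0}=2^{d-1}$, a constant depending only on $d$. Only the linear-growth bound \eqref{bound for phi} will be used, not the Lipschitz property, consistent with the hypothesis of the lemma.

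For the inductive step, fix $\ell\in\{1,\dots,L\}$ and $\vec h_\ell\in\vec{\mathcal H}_\ell$, built from some $W_\ell\in\mathcal W_{\mathcal F_{\ell,w}}$, $\vec b_\ell\in\vec{\mathcal B}_{\mathcal F_{\ell,b}}$ and a predecessor $\vec h_{\ell-1}\in\vec{\mathcal H}_{\ell-1}$. For the $i$-th component we have $h_\ell^i(x)=\sigma_\ell\bigl(\sum_{j=1}^{d_{\ell-1}}W_\ell^{i,j}h_{\ell-1}^j(x)+b_\ell^i\bigr)$, so \eqref{bound for phi} gives, pointwise on $\mathbb I^d$, $|h_\ell^i(x)|\le c\bigl(\bigl|\sum_{j}W_\ell^{i,j}h_{\ell-1}^j(x)+b_\ell^i\bigr|+1\bigr)$. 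Integrating over $\mathbb I^d$, using the triangle inequality for $\|\cdot\|_{L^1(\mathbb I^d)}$ together with $|W_\ell^{i,j}|,|b_\ell^i|\le\mathcal R$ and the fact that $\mathbb I^d$ has Lebesgue measure $2^d$, and then $\mathcal R\ge1$, gives
\begin{equation*}
\|h_\ell^i\|_{L^1(\mathbb I^d)}\le c\Bigl(\mathcal R\sum_{j=1}^{d_{\ell-1}}\|h_{\ell-1}^j\|_{L^1(\mathbb I^d)}+\mathcal R\,2^d+2^d\Bigr)\le c\mathcal R\bigl(d_{\ell-1}\|\vec h_{\ell-1}\|_{*,d_{\ell-1}}+2^{d+1}\bigr).
\end{equation*}
Taking the maximum over $i$ yields the affine recursion
\begin{equation*}
\|\vec h_\ell\|_{*,d_\ell}\le c\mathcal R\bigl(d_{\ell-1}\|\vec h_{\ell-1}\|_{*,d_{\ell-1}}+2^{d+1}\bigr),\qquad\ell=1,\dots,L,
\end{equation*}
uniformly over all admissible parameters.

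It remains to unfold this recursion. Assuming $\|\vec h_{\ell-1}\|_{*,d_{\ell-1}}\le(c(1+2^{d+1})\mathcal R)^{\ell-1}d_{\ell-2}\cdots d_0$, the recursion gives $\|\vec h_\ell\|_{*,d_\ell}\le c\mathcal R\bigl((c(1+2^{d+1})\mathcal R)^{\ell-1}d_{\ell-1}\cdots d_0+2^{d+1}\bigr)$, and the additive term $c\mathcal R\,2^{d+1}$ is absorbed by the spare factor $(1+2^{d+1})$ in the target growth rate since $c\mathcal R\,d_{\ell-1}\cdots d_0\ge1$ (because $c,\mathcal R\ge1$ and each $d_j\ge1$), which yields \eqref{Uniform bound} at level $\ell$; the $\ell=1$ case follows directly from the seed bound and the recursion. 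The only real subtlety — hence the main obstacle, such as it is — is precisely this bookkeeping of constants through the iteration: a cruder argument that bounds $\|h_\ell^i\|_{L^\infty(\mathbb I^d)}$ at every layer and multiplies by the volume only at the end is wasteful and does not reproduce the per-layer growth factor claimed in \eqref{Uniform bound}; one must propagate the $L^1$-norms themselves, so that the volume $2^d$ enters only through the threshold/constant term at each layer and the recursion stays affine with coefficient $c\mathcal R\,d_{\ell-1}$.
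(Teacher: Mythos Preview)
The proposal is correct and follows essentially the same approach as the paper: both derive the affine recursion $\|\vec h_\ell\|_{*,d_\ell}\le c\mathcal R\,d_{\ell-1}\|\vec h_{\ell-1}\|_{*,d_{\ell-1}}+c\mathcal R\,2^{d+1}$ from the linear-growth bound \eqref{bound for phi} and then iterate it. Your seed value $\|\vec h_0\|_{*,d_0}=2^{d-1}$ is in fact more accurate than the paper's (which writes $\int_{\mathbb I}|x^{i}|\,dx^{i}=1$ rather than integrating over $\mathbb I^d$), but either starting value still yields the claimed bound at level $\ell=1$ once one uses $d_0\ge 1$.
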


\begin{IEEEproof}
For arbitrary $\ell=1,\dots,L$, it follows  from (\ref{bound for
phi})   that
\begin{eqnarray*}
      &&
      \|\vec{h}_\ell\|_{*,d_\ell}=
       \left\|\vec{\sigma_\ell}(W_\ell \vec{h}_{\ell-1}(x)+\vec{b}_\ell)\right\|_{*,d_\ell}\\
      &=&
      \max_{1\leq i\leq d_\ell}\int_{\mathbb
      I^d}\left|\sigma_\ell(W^{i}_{\ell}\cdot
      \vec{h}_{\ell-1}(x)+b_{\ell}^i)\right|dx\\
      &\leq&
      c \max_{1\leq i\leq d_\ell}\int_{\mathbb
      I^d}\left(\left|\sum_{j=1}^{d_{\ell-1}} W^{i,j}_{\ell}
       {h}^j_{\ell-1}(x)+b^i_{\ell}\right|+1\right)dx\\
      &\leq&
      c\max_{1\leq i\leq d_\ell}\left\{
      \sum_{j=1}^{d_{\ell-1}} |W^{i,j}_{\ell}|
      \int_{\mathbb
      I^d}| {h}^j_{\ell-1}(x)|dx+|b^i_{\ell}|2^d \right\}+c2^d\\
      &\leq&
      (c d_{\ell-1}\|\vec{h}_{\ell-1}\|_{*,d_{\ell-1}}+c2^d) \mathcal
      R+c2^d,
\end{eqnarray*}
where $W^i_\ell$ denotes the $i$-row of the matrix $W_\ell$, $W^{i,j}_\ell$
denotes the $(i,j)$-element of $W_\ell$,
$\vec{b}_\ell=(b_{\ell}^1,\dots,b^{d_\ell}_\ell)$ and $\vec{h}_{\ell-1}\in
\vec{\mathcal H}_{\ell-1}$. Noting $
      \|\vec{h}_0\|_{*,d_0}=\max_{1 \leq i \leq d_0}\int_{\mathbb I} |x^{i}|dx^{i}=1,
$ we then have
$$
  \|\vec{h}_\ell\|_{*,d_\ell}
  \leq
  \left(c(1+2^{d+1})\mathcal R\right)^\ell d_{\ell-1}\cdots
      d_0.
$$
This finishes the proof of Lemma \ref{Lemma:bounds}.
\end{IEEEproof}

Our second lemma aims at deriving covering number of some matrix and
vector with fixed free parameters.

\begin{lemma}\label{Lemma:covering for matrix}
For arbitrary $\varepsilon>0$ and $1\leq \ell\leq L$, we have
$$
    \mathcal N(\varepsilon,\mathcal W_{\mathcal F_{\ell,w}},\|\cdot\|_1)
    \leq \left(\frac{2d_\ell d_{\ell-1}\mathcal R}{\varepsilon}\right)^{\mathcal F_{\ell,w}},
$$
and
$$
       \mathcal N(\varepsilon, \vec{\mathcal
       B}_{\mathcal F_{\ell,b}},\ell_\infty^m)\leq\left(\frac{2\mathcal R }{\varepsilon}\right)^{\mathcal F_{\ell,b}},
$$
where  $
      \|W_{\ell}\|_1:=\sum_{i=1}^{d_{\ell}}\sum_{j=1}^{d_{\ell-1}}|W_{\ell}^{i,j}|
$ denotes the 1-norm of the matrix $W_{\ell}$.
\end{lemma}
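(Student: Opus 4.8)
The plan is to reduce both estimates to one elementary fact: the box $[-\mathcal R,\mathcal R]^{N}$ equipped with the $\ell_\infty$ metric has $\varepsilon$-covering number at most $(2\mathcal R/\varepsilon)^{N}$, by taking a product of one-dimensional grids of mesh $\varepsilon$ along each coordinate. The key observation is that, by the definition of ``fixed structure with $\mathcal F$ free parameters'', both $\mathcal W_{\mathcal F_{\ell,w}}$ and $\vec{\mathcal B}_{\mathcal F_{\ell,b}}$ are images of such a box, with $N=\mathcal F_{\ell,w}$ and $N=\mathcal F_{\ell,b}$ respectively, under a fixed coordinate map $\Phi$ that inserts the vector of free parameters into the appropriate entries and leaves the fixed entries untouched. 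It then suffices to control how much $\Phi$ can stretch distances in the relevant norm, after which we simply push forward a net of the parameter box.

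For the vector bound this is immediate: each coordinate of $\vec b\in\vec{\mathcal B}_{\mathcal F_{\ell,b}}$ is either a constant or a copy of one of the $\mathcal F_{\ell,b}$ free parameters, so if two parameter vectors differ by at most $\varepsilon$ in each coordinate, the corresponding $\vec b$'s differ by at most $\varepsilon$ in $\ell_\infty$. Hence an $\varepsilon$-net of $[-\mathcal R,\mathcal R]^{\mathcal F_{\ell,b}}$ pushes forward to an $\varepsilon$-net of $\vec{\mathcal B}_{\mathcal F_{\ell,b}}$, giving the bound $(2\mathcal R/\varepsilon)^{\mathcal F_{\ell,b}}$.

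For the matrix bound the only extra ingredient is that a single free parameter may appear in many entries of $W_\ell$ at once, up to all $d_\ell d_{\ell-1}$ of them in the Toeplitz/convolutional case (and the mixed fixed/free case is handled the same way, since fixed entries contribute nothing). As $\|W_\ell\|_1$ sums the absolute values of all $d_\ell d_{\ell-1}$ entries, perturbing each free parameter by at most $\delta$ moves each affected entry by at most $\delta$ and hence moves $\|W_\ell\|_1$ by at most $d_\ell d_{\ell-1}\delta$. Choosing $\delta=\varepsilon/(d_\ell d_{\ell-1})$ therefore converts a $\delta$-net of $[-\mathcal R,\mathcal R]^{\mathcal F_{\ell,w}}$ in $\ell_\infty$ into an $\varepsilon$-net of $\mathcal W_{\mathcal F_{\ell,w}}$ in $\|\cdot\|_1$, of cardinality at most $(2\mathcal R/\delta)^{\mathcal F_{\ell,w}}=(2 d_\ell d_{\ell-1}\mathcal R/\varepsilon)^{\mathcal F_{\ell,w}}$, which is exactly the claimed estimate.

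The argument is essentially bookkeeping; the one place to be careful is precisely this stretching factor of the parameterization $\Phi$, i.e.\ verifying that no admissible structure of $W_\ell$ (Toeplitz-type generation, partially fixed entries, or a combination) lets a coordinate perturbation inflate the $\|\cdot\|_1$-distance by more than $d_\ell d_{\ell-1}$. A minor technical point, which I would dispose of in one line, is that the one-dimensional covering number $\mathcal N(\delta,[-\mathcal R,\mathcal R],|\cdot|)$ is at most $\lceil \mathcal R/\delta\rceil$, which under the standing assumption $\mathcal R\ge 1$ (and for $\delta\le\mathcal R$) is bounded by $2\mathcal R/\delta$; the product over coordinates then yields the stated powers.
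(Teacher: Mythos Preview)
Your proposal is correct and follows essentially the same approach as the paper: cover the box $[-\mathcal R,\mathcal R]^{\mathcal F}$ of free parameters by a product grid and push it forward via the structure map, using that each free parameter can appear in at most $d_\ell d_{\ell-1}$ matrix entries to bound the stretching in $\|\cdot\|_1$. The paper phrases the stretching step as a case split over the three ways the weight matrix can be generated, while you package it as a single Lipschitz bound on $\Phi$, but the content is identical.
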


\begin{IEEEproof}
For arbitrary $d_{\ell}\times d_{\ell-1}$ matrix, we can rewrite it
as a $d_{\ell}\times d_{\ell-1}$-dimensional vector  as
$\{w_1,\dots,w_{d_{\ell}\times d_{\ell-1}}\}$.
 Without loss of generality, we assume that the first $\mathcal F_{\ell,w}$
 elements of the $d_{\ell}\times d_{\ell-1}$-dimensional  vector are free parameters.
 Let $\mathcal E_{F_i}$ be the $\varepsilon$-cover nets of
 $\{w_i:|w_i|\leq \mathcal R\}$, that is, for each $|w_i|\leq
 \mathcal R$, there is a $w_i'\in \mathcal E_{F_i}$ such that
$$
     |w_i-w_i'|\leq \varepsilon,\qquad\forall i=1,\dots,\mathcal
     F_{\ell,w}.
$$
 Then, for arbitrary $W,W'\in \mathcal W_{\mathcal
 F_{\ell,w}}$ with $W,W'$ the matrices corresponding to the vector
$(w_1,w_2,\dots,w_{\mathcal F_{\ell,w}},\dots)$ and
$(w_1',w_2',\dots,w_{\mathcal F_{\ell,w}}',\dots)$ respectively,
there holds
\begin{eqnarray*}
     &&\|W-W'\|_1
     =
     \sum_{i=1}^{d_{\ell}}\sum_{j=1}^{d_{\ell-1}}|W^{i,j}-W^{'{i,j}}|\\
     &=&
     \sum_{i=1}^{\mathcal F_{\ell,w}} |w_i-w_i'|+ \sum_{i=\mathcal F_{\ell,w}+1}^{d_\ell d_{\ell-1}}|w_i-w_i'|.
\end{eqnarray*}
If the reminder $d_\ell d_{\ell-1}-\mathcal F_{\ell,w}$ are fixed constants, we have
$\sum_{i=\mathcal F_{\ell,w}+1}^{d_\ell d_{\ell-1}}|w_i-w_i'|=0$. If the weight matrix is generated by the other two ways,
which implies some elements in the remainder $d_\ell-\mathcal
F_{\ell,w}$ terms sharing the same values as some elements in the
first $\mathcal F_{\ell,w}$ terms, then we have
$$
    \sum_{i=\mathcal F_{\ell,w}+1}^{d_\ell d_{\ell-1}}|w_i-w_i'| \leq (d_\ell d_{\ell-1}-\mathcal
      F_{\ell,w})\max_{1\leq i\leq \mathcal F_{\ell,w}}|w_i-w_i'|.
$$
Both cases yield
\begin{eqnarray*}
      \|W-W'\|_1
     \leq
      d_\ell d_{\ell-1}\varepsilon.
\end{eqnarray*}
Hence $\mathcal F_{\ell,w}$ $\varepsilon$-covers for  sets
$\{w_i:|w_i|\leq \mathcal R\}$ with $i=1,\dots,\mathcal F_{\ell,w}$
constitute a $d_\ell d_{\ell-1}\varepsilon$-cover for $\mathcal
W_{\mathcal F_{\ell,w}}$, which together with $|\mathcal E_{F_i}|
\leq \frac{2\mathcal R}{\varepsilon}$, $i=1,2,\dots,\mathcal
F_{\ell,w}$ implies
$$
    \mathcal N(\varepsilon,\mathcal W_{\mathcal F_{\ell,w}},\|\cdot\|_1)
    \leq \left(\frac{2d_\ell d_{\ell-1}\mathcal R}{\varepsilon}\right)^{\mathcal F_{\ell,w}},
$$
where $|\mathcal E|$ denotes the cardinality of the set $\mathcal
E$. This completes the first estimate. The second estimates can be
derived by using the similar approach. With these, we completes the
proof of
  Lemma \ref{Lemma:covering for
matrix}.
\end{IEEEproof}

 Based on the previous lemmas, we can derive the
following iterative estimates for the covering number associated
with the affine mapping $\vec{\sigma}(W\vec{h}+\vec{b})$.

\begin{lemma}\label{Lemma:Iterated covering number}
 If $\sigma_\ell$ satisfies Assumption
\ref{Assumption:activation} for each $\ell=1,2,\dots,L$, then
\begin{eqnarray*}
   &&\mathcal N(\varepsilon,\vec{\mathcal
   H}_\ell,\|\cdot\|_{*,d_\ell})
   \leq
     \left(c_1'\mathcal R\right)^{\ell\mathcal F_\ell}D^{2\mathcal F_\ell}_\ell
       \varepsilon^{-\mathcal F_\ell}\\
       &\times&\mathcal N\left(\frac{\varepsilon}{ \left(c_1'\mathcal
R \right)^{\ell-1}D_\ell},\vec{\mathcal
      H}_{\ell-1},\|\cdot\|_{*,d_{\ell-1}}\right),
\end{eqnarray*}
holds for $\ell=2,\dots,L$ and
\begin{eqnarray*}
   \mathcal N( \varepsilon,\vec{\mathcal
       H}_1,\|\cdot\|_{*,d_1})
      \leq
    \left(\frac{  c_1'\mathcal R D_1}{\varepsilon}\right)^{\mathcal F_1}.
\end{eqnarray*}
where $D_\ell=d_\ell\cdots d_0$, $\mathcal F_\ell=\mathcal
F_{\ell,w}+\mathcal F_{\ell,b}$ and $c_1'=6c_1c(1+2^{d+1})$.
\end{lemma}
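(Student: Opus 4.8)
\textbf{Proof plan for Lemma \ref{Lemma:Iterated covering number}.}
The plan is to set up a layer-by-layer recursion. Fix $\ell$ with $2\le \ell\le L$, and consider an element $\vec h_\ell(x)=\vec\sigma_\ell(W_\ell\vec h_{\ell-1}(x)+\vec b_\ell)$ of $\vec{\mathcal H}_\ell$, which is parametrized by the triple $(W_\ell,\vec b_\ell,\vec h_{\ell-1})$ ranging over $\mathcal W_{\mathcal F_{\ell,w}}\times\vec{\mathcal B}_{\mathcal F_{\ell,b}}\times\vec{\mathcal H}_{\ell-1}$. The idea is: pick an $\varepsilon_1$-net for $\mathcal W_{\mathcal F_{\ell,w}}$ in the matrix $1$-norm, an $\varepsilon_2$-net for $\vec{\mathcal B}_{\mathcal F_{\ell,b}}$ in $\ell_\infty$, and an $\varepsilon_3$-net for $\vec{\mathcal H}_{\ell-1}$ in $\|\cdot\|_{*,d_{\ell-1}}$ (this last one is the recursive ingredient), and show that the induced product net covers $\vec{\mathcal H}_\ell$ at scale $\varepsilon$ for an appropriate relation between $\varepsilon$ and $\varepsilon_1,\varepsilon_2,\varepsilon_3$.

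The engine is a perturbation estimate. Writing $\vec h_\ell,\vec h_\ell'$ for the functional vectors coming from $(W_\ell,\vec b_\ell,\vec h_{\ell-1})$ and $(W_\ell',\vec b_\ell',\vec h_{\ell-1}')$ respectively, I would use the triangle inequality through an intermediate term and the Lipschitz bound (\ref{Lip for phi}):
\begin{eqnarray*}
\|\vec h_\ell-\vec h_\ell'\|_{*,d_\ell}
&\le& c_1\max_{i}\Bigl\|\textstyle\sum_j W_\ell^{i,j}h_{\ell-1}^j+b_\ell^i
-\sum_j W_\ell'^{i,j}h_{\ell-1}'^j-b_\ell'^i\Bigr\|_{L^1}\\
&\le& c_1\Bigl(\|W_\ell-W_\ell'\|_1\,\|\vec h_{\ell-1}\|_{*,d_{\ell-1}}
+\|W_\ell'\|_1\,\|\vec h_{\ell-1}-\vec h_{\ell-1}'\|_{*,d_{\ell-1}}
+2^d\|\vec b_\ell-\vec b_\ell'\|_\infty\Bigr).
\end{eqnarray*}
Now I plug in: $\|\vec h_{\ell-1}\|_{*,d_{\ell-1}}\le (c(1+2^{d+1})\mathcal R)^{\ell-1}D_{\ell-2}$ by Lemma \ref{Lemma:bounds} (with $D_{\ell-2}=d_{\ell-2}\cdots d_0\le D_{\ell-1}$), and $\|W_\ell'\|_1\le d_\ell d_{\ell-1}\mathcal R$ trivially. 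So each of the three terms is controlled by $\varepsilon_1,\varepsilon_3,\varepsilon_2$ times an explicit factor that is at most a constant times $(\text{const}\cdot\mathcal R)^{\ell-1}$ times a product of the $d_k$'s, i.e.\ bounded by $D_\ell$. Choosing $\varepsilon_1=\varepsilon_2=\varepsilon_3\cdot(\text{the size factor})$ so that the total is $\le\varepsilon$ — concretely taking the net for $\vec{\mathcal H}_{\ell-1}$ at scale $\varepsilon/((c_1'\mathcal R)^{\ell-1}D_\ell)$ and the nets for $W_\ell,\vec b_\ell$ at a comparable scale — and multiplying the three cardinalities using Lemma \ref{Lemma:covering for matrix} gives
$$
\mathcal N(\varepsilon,\vec{\mathcal H}_\ell,\|\cdot\|_{*,d_\ell})
\le\Bigl(\tfrac{2d_\ell d_{\ell-1}\mathcal R}{\varepsilon'}\Bigr)^{\mathcal F_{\ell,w}}
\Bigl(\tfrac{2\mathcal R}{\varepsilon'}\Bigr)^{\mathcal F_{\ell,b}}
\mathcal N\Bigl(\tfrac{\varepsilon}{(c_1'\mathcal R)^{\ell-1}D_\ell},\vec{\mathcal H}_{\ell-1},\|\cdot\|_{*,d_{\ell-1}}\Bigr),
$$
and after absorbing $\varepsilon'$, the constants $2^d,c,c_1$, and the product-of-widths bookkeeping into $c_1'=6c_1c(1+2^{d+1})$ and into powers of $D_\ell$, this is exactly the claimed recursion with exponent $\mathcal F_\ell=\mathcal F_{\ell,w}+\mathcal F_{\ell,b}$. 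For the base case $\ell=1$, there is no recursive term because $\vec h_0=x$ is fixed, so only the $W_1$ and $\vec b_1$ nets enter, and Lemma \ref{Lemma:covering for matrix} directly yields $\mathcal N(\varepsilon,\vec{\mathcal H}_1,\|\cdot\|_{*,d_1})\le(c_1'\mathcal R D_1/\varepsilon)^{\mathcal F_1}$.

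The main obstacle is the bookkeeping: one must choose the three net-scales and the splitting of $\varepsilon$ among the three error terms carefully enough that all the accumulated constants — the Lipschitz constant $c_1$, the factor $2^d$ from integrating over $\mathbb I^d$, the bound $c(1+2^{d+1})\mathcal R$ per layer from Lemma \ref{Lemma:bounds}, and the various products $d_\ell\cdots d_0$ — collapse into the single clean per-layer factor $(c_1'\mathcal R)^{\ell}$, the width factor $D_\ell^{2\mathcal F_\ell}$, and $\varepsilon^{-\mathcal F_\ell}$ stated in the lemma. The substantive mathematical content (the perturbation inequality, the use of $L^1$-bounds, the product-net construction) is routine; getting the constants to line up with $c_1'=6c_1c(1+2^{d+1})$ and with the exact exponents is where the care is needed.
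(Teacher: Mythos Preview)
Your proposal is correct and follows essentially the same approach as the paper: the paper also takes $\varepsilon$-nets for $\mathcal W_{\mathcal F_{\ell,w}}$, $\vec{\mathcal B}_{\mathcal F_{\ell,b}}$, and $\vec{\mathcal H}_{\ell-1}$, splits $\|\vec h_\ell-\vec\sigma_\ell(W_\ell'\vec h_{\ell-1}'+\vec b_\ell')\|_{*,d_\ell}$ into the same three perturbation terms via the Lipschitz bound, controls the first by Lemma~\ref{Lemma:bounds} and the second by $\|W_\ell'\|_1\le d_\ell d_{\ell-1}\mathcal R$, and then multiplies the net cardinalities from Lemma~\ref{Lemma:covering for matrix} before rescaling $\varepsilon$. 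The only cosmetic difference is that the paper changes $W_\ell$, $\vec h_{\ell-1}$, $\vec b_\ell$ one at a time through intermediate points rather than in a single combined inequality, and it takes all three nets at the same scale $\varepsilon$ before rescaling at the end rather than choosing $\varepsilon_1,\varepsilon_2,\varepsilon_3$ separately.
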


\begin{proof} For each $\ell=1, 2,\dots,L$, let $\mathcal E_{\ell,w}$ and $\mathcal E_{\ell,b}$
be   $\varepsilon$-cover nets of $\mathcal W_{\mathcal F_{\ell,w}}$
and $\vec{\mathcal B}_{\mathcal F_{\ell,b}}$ respectively. For
$\ell=2,\dots,L$, let $ \mathcal E_{\ell,h}$ be the
$\varepsilon$-cover nets for  $\vec{\mathcal H}_{\ell-1}$.
Therefore, for each $\vec{h}_{\ell-1}\in\vec{\mathcal H}_{\ell-1},
W_\ell\in\mathcal W_{\mathcal F_{\ell,w}}$ and $\vec{b}_\ell
\in\vec{\mathcal B}_{\mathcal F_{\ell,b}}$, there exist
$\vec{h}'_{\ell-1}\in \mathcal E_{\ell,h},W'_\ell \in \mathcal
E_{\ell,w}, \vec{b}'_\ell\in\mathcal E_{\ell,b}$ such that
\begin{equation}\label{varepsilon net}
     \|\vec{h}_{\ell-1}-\vec{h}'_{\ell-1}\|_{*,d_{\ell-1}}\leq\varepsilon,\qquad\ell=2,\dots,L
\end{equation}
and
\begin{equation}\label{varepsilon net1}
    \|W_\ell-W'_\ell\|_1\leq\varepsilon,\quad
    \|\vec{b}_\ell-\vec{b}'_\ell\|_{\ell_\infty^{d_\ell}}\leq\varepsilon,\qquad
    \ell=1,\dots,L
\end{equation}
  Then, for arbitrary $\vec{h}_\ell\in\vec{\mathcal H}_\ell$ and
  $\ell=2,3,\dots,L$, there holds
\begin{eqnarray}\label{comput cov}
   &&
    \|\vec{h}_\ell-\vec{\sigma_\ell}(W'_\ell \vec{h}'_{\ell-1}+\vec{b}'_\ell)\|_{*,d_\ell} \\
    &\leq&
    \|\vec{\sigma_\ell}(W_\ell \vec{h_{\ell-1}}+\vec{b}_\ell)-\vec{\sigma_\ell}(W'_\ell \vec{h}_{\ell-1}+\vec{b}_\ell)\|_{*,d_\ell} \nonumber\\
    &+&
    \|\vec{\sigma_\ell}(W'_\ell \vec{h}_{\ell-1}+\vec{b}_\ell)-\vec{\sigma_\ell}(W'_\ell \vec{h}'_{\ell-1}+\vec{b}_\ell)\|_{*,d_\ell}\nonumber\\
    &+&
    \|\vec{\sigma_\ell}(W'_\ell \vec{h}'_{\ell-1}+\vec{b}_\ell)-\vec{\sigma_\ell}(W'_\ell \vec{h}'_{\ell-1}+\vec{b}'_\ell)\|_{*,d_\ell}.\nonumber
\end{eqnarray}
Due to (\ref{Lip for phi}), we get from Lemma \ref{Lemma:bounds}
that
\begin{eqnarray*}
         &&\|\vec{\sigma_\ell}(W_\ell \vec{h}_{\ell-1}+\vec{b}_\ell)-\vec{\sigma_\ell}(W'_\ell \vec{h}_{\ell-1}+\vec{b}_\ell)\|_{*,d_\ell}\\
         &\leq&
          \max_{1\leq i\leq d_\ell}\int_{\mathbb I^d} |\sigma_\ell (W_\ell^i
          \cdot\vec{h}_{\ell-1}(x)+{b}_{\ell}^i)\\
          &-&\sigma_\ell
         (W^{'i}_\ell\cdot\vec{h}_{\ell-1}(x)+{b}_\ell^i)|dx \\
         &\leq&
         c_1\sum_{i=1}^{d_\ell} \int_{\mathbb I^d}
         |(W_\ell^i-W_\ell^{'i})\cdot\vec{h}_{\ell-1}(x)|dx\\
         &\leq&
         c_1 \|\vec{h}_{\ell-1}\|_{*,d_\ell-1}\sum_{i=1}^{d_\ell}
          \sum_{j=1}^{d_\ell-1}|W_\ell^{ij}-W_\ell^{'ij}|
          \\
         &\leq&
         c_1 \left(c(1+2^{d+1})\mathcal R\right)^{\ell-1} d_{\ell-2}\cdots
      d_0  \|W_{\ell}-W_{\ell}'\|_1.
\end{eqnarray*}
For $\ell=1,2,\dots,L$, we have from (\ref{Lip for phi}) that
\begin{eqnarray*}
         &&\|\vec{\sigma_\ell}(W'_\ell\vec{h}_{\ell-1}+\vec{b}_\ell)-\vec{\sigma_\ell}(W'_\ell\vec{h}'_{\ell-1}+\vec{b}_\ell)\|_{*,d_\ell}\\
         &\leq&
          \max_{1\leq i\leq d_\ell}\int_{\mathbb I^d}
          |\sigma_\ell(W_\ell^{'i}\cdot\vec{h}_{\ell-1}(x)+{b}_\ell^i)\\
          &-&
          \sigma_\ell
         (W_\ell^{'i}\cdot\vec{h}'_{\ell-1}(x)+{b}_\ell^i)|dx \\
         &\leq&
         c_1\max_{1\leq i\leq d_\ell} \int_{\mathbb I^d}
         |W_\ell^{'i}\cdot(\vec{h}_{\ell-1}(x)-\vec{h}'_{\ell-1}(x))|dx\\
         &\leq&
         c_1 \sum_{i=1}^{d_\ell}
          \sum_{j=1}^{d_{\ell-1}}|W_\ell^{'ij}|\|\vec{h}_{\ell-1}-\vec{h}'_{\ell-1}\|_{*,d_{\ell-1}}\\
         &\leq&
         c_1 d_\ell d_{\ell-1} \mathcal R \|\vec{h}_{\ell-1}-\vec{h}'_{\ell-1}\|_{*,d_{\ell-1}}
\end{eqnarray*}
and
\begin{eqnarray*}
         &&\|\vec{\sigma_\ell}(W'_\ell \vec{h}'_{\ell-1}+\vec{b}_\ell)
         -\vec{\sigma_\ell}(W'_\ell \vec{h}'_{\ell-1}+\vec{b}'_\ell)\|_{*,d_\ell}\\
         &=&
          \max_{1\leq i\leq d_\ell} \int_{\mathbb I^d} |\sigma_\ell
          (W_\ell^{'i}\cdot\vec{h}'_{\ell-1}(x)+{b}_\ell^i)\\
          &-&
          \sigma_\ell
         (W_\ell^{'i}\cdot\vec{h}'_{\ell-1}(x)+{b}_\ell^{'i})|dx \\
         &\leq&
         c_1\max_{1\leq i\leq d_\ell}\int_{\mathbb I^d}
         |b_\ell^i-b_\ell^{'i}|dx
         \leq
          2^dc_1
          \|\vec{b}_\ell-\vec{b}'_\ell\|_{\ell_\infty^{d_\ell}}.
\end{eqnarray*}
For $\ell=2,\dots,L$, plugging the above three estimates into
(\ref{comput cov}), we then get from (\ref{varepsilon net}) and
(\ref{varepsilon net1})  that
\begin{eqnarray*}
    &&\|\vec{h}_\ell-\vec{\sigma_\ell}(W'_\ell \vec{h}'_{\ell-1}+\vec{b}'_\ell)\|_{*,d_\ell}\\
   &\leq&
   3c_1(c(1+2^{d+1})\mathcal R)^{\ell-1}d_\ell\dots d_0\varepsilon.
\end{eqnarray*}
This implies that
$$
   \{\vec{\sigma_\ell}(W'_\ell\vec{h}'_\ell+\vec{b}'_\ell):W'_\ell
   \in\mathcal E_{\ell,w},
   \vec{h}'_{\ell-1}\in\mathcal E_{\ell,h},\vec{b}'_\ell\in\mathcal E_{\ell,b}\}
$$
is a  $ 3c_1(2c(1+2^{d+1})\mathcal R)^{\ell-1}d_\ell\dots
d_0\varepsilon$-net of $\vec{\mathcal H}_{\ell}$. This together with
Lemma \ref{Lemma:covering for matrix} implies
\begin{eqnarray*}
      &&\mathcal N\left( 3c_1(c(1+2^{d+1})\mathcal R)^{\ell-1}d_\ell\dots d_0\varepsilon,\vec{\mathcal H}_\ell,\|\cdot\|_{*,d_\ell}\right)\\
      &\leq&
    \left(\frac{  2d_\ell d_{\ell-1}\mathcal R}{\varepsilon}\right)^{\mathcal F_{\ell,w}+\mathcal F_{\ell,b}}
      \mathcal N\left(\varepsilon,\vec{\mathcal
      H}_{\ell-1},\|\cdot\|_{*,d_{\ell-1}}\right).
\end{eqnarray*}
Scaling  $\varepsilon$ to $ \frac{\varepsilon}{3c_1(c(1+2^{d+1})\mathcal
R)^{\ell-1}d_\ell\dots d_0} $, we then have
\begin{eqnarray*}
   &&\mathcal N(\varepsilon,\vec{\mathcal
   H}_\ell,\|\cdot\|_{*,d_\ell})
   \leq
     \left((c_1'\mathcal R)\right)^{\ell\mathcal F_\ell}D^{2\mathcal F_\ell}_\ell
       \varepsilon^{-\mathcal F_\ell}\\
       &\times&\mathcal N\left(\frac{\varepsilon}{ \left(c_1'\mathcal
R \right)^{\ell-1}D_\ell},\vec{\mathcal
      H}_{\ell-1},\|\cdot\|_{*,d_{\ell-1}}\right),
\end{eqnarray*}
where $c_1'=6c_1c(1+2^{d+1})$ and $D_\ell:=d_\ell\dots d_0$. This proves
  Lemma \ref{Lemma:Iterated covering
number} for $\ell=2,\dots,L$. If $\ell=1$, then for arbitrary
$\vec{h}_1\in\vec{\mathcal H}_1$, we have
\begin{eqnarray}\label{comput cov aa}
   &&
    \|\vec{h}_1-\vec{\sigma_1}(W'_1 x+\vec{b}'_1)\|_{*,d_1} \nonumber\\
    &\leq&
    \|\vec{\sigma_1}(W_1x+\vec{b}_1)-\vec{\sigma_1}(W'_1x+\vec{b}_1)\|_{*,d_1} \nonumber\\
    &+&
    \|\vec{\sigma_1}(W'_1x+\vec{b}_1)-\vec{\sigma_1}(W'_1x+\vec{b}'_1)\|_{*,d_1}.
\end{eqnarray}
The same approach as above yields  that
$$
   \{\vec{\sigma_1}(W'_1x+\vec{b}'_1):W'_1\in\mathcal E_{1,w},
    \vec{b}'_1\in\mathcal E_{1,b}\}
$$
is a  $c_12^{d+1}\varepsilon$-net of $\vec{\mathcal H}_1$. Using
Lemma \ref{Lemma:covering for matrix} again, we
obtain
\begin{eqnarray*}
       \mathcal N( c_12^{d+1}\varepsilon,\vec{\mathcal
       H}_1,\|\cdot\|_{*,d_1})
      \leq
    \left(\frac{  2D_1\mathcal R}{\varepsilon}\right)^{\mathcal F_{1,w}+\mathcal F_{1,b}}.
\end{eqnarray*}
Scaling  $\varepsilon$ to $\varepsilon/ c_12^{d+1}  $,    we get
\begin{eqnarray*}
   \mathcal N( \varepsilon,\vec{\mathcal
       H}_1,\|\cdot\|_{*,d_1})
      \leq
    \left(\frac{  c_1'\mathcal R D_1}{\varepsilon}\right)^{\mathcal F_1}.
\end{eqnarray*}
This completes the proof of Lemma \ref{Lemma:Iterated covering
number}.
\end{proof}

With the help of the above two lemmas, we are in a position to prove
Theorem \ref{Theorem:covering number}.

\begin{proof}[Proof of Theorem \ref{Theorem:covering number}]
Let $\vec{A}^*_{\mathcal F_{L,a}}$ be the set of $  d_L$-dimensional
vectors with fixed structures and totally $\mathcal F_{L,a}$ free
parameters. Denote $\vec{\mathcal A}_{\mathcal
F_{L,a}}:=\{\vec{a}\in \vec{A}^*_{\mathcal F_{L,a}}: |a_{i}|\leq
\mathcal R\}$.
 Assume
 that $\mathcal E_{L,a}$ is an
$\varepsilon$-cover of the set $\vec{\mathcal A}_{\mathcal F_{L,a}}$
under the metric of $\ell_1^{d_L}$.  Then,  for arbitrary
$\vec{a}\in \mathbb R^{d_L}$ and $\vec{h}_{L}\in \vec{\mathcal
H}_{L}$  there is a $\vec{a}^*\in \mathcal E_{L,a}$ and
$\vec{h}_{L}^*\in \mathcal E_{L,h}$ such that
$$
        \|\vec{a}-\vec{a}^*\|_{\ell_1^{d_L}}\leq\varepsilon,\qquad\mbox{and}\quad
        \|\vec{h}_{L}-\vec{h}_{L}^*\|_{*,d_L}\leq\varepsilon.
$$
Note that
\begin{eqnarray}\label{equ: decomposition}
     &&\|\vec{a}\cdot\vec{h}_L -\vec{a}^*\cdot
     \vec{h}_L^*\|_{L^1(\mathbb I^d)}\leq
     \|\vec{a}\cdot\vec{h}_L -\vec{a}^*\cdot
     \vec{h}_L\|_{L^1(\mathbb I^d)} \nonumber\\
     &+&\|\vec{a}^*\cdot\vec{h}_L -\vec{a}^*\cdot
     \vec{h}_L^*\|_{L^1(\mathbb I^d)}.
\end{eqnarray}
Moreover, Lemma \ref{Lemma:bounds} shows
\begin{eqnarray*}
    &&   \|\vec{a}\cdot\vec{h}_L -\vec{a}^*\cdot
     \vec{h}_L\|_{L^1(\mathbb I^d)}\le
     \|\vec{a}-\vec{a}^*\|_{\ell_1^{d_L}}\|\vec{h}_{L}\|_{*,d_L}\\
     &\leq&
     \left(c(1+2^{d+1})\mathcal R\right)^L d_{L-1}\cdots
      d_0 \|\vec{a}-\vec{a}^*\|_{\ell_1^{d_L}}\\
      &\leq&
       \left(c(1+2^{d+1})\mathcal R\right)^L d_{L-1}\cdots
      d_0\varepsilon
\end{eqnarray*}
and
\begin{eqnarray*}
     &&\|\vec{a}^*\cdot\vec{h}_L -\vec{a}^*\cdot
     \vec{h}_L^*\|_{L^1(\mathbb I^d)}\le
     d_L\mathcal{R} \|\vec{h}_L-\vec{h}_L^*\|_{*,d_L}\\
     &\leq&
     d_L\mathcal{R}\varepsilon.
\end{eqnarray*}
Plugging the above estimates  into (\ref{equ: decomposition}), we
have
\begin{eqnarray*}
     \|\vec{a}\cdot\vec{h}_L -\vec{a}^*\cdot
     \vec{h}_L^*\|_{L^1(\mathbb I^d)}
      \leq
       \left(c_2'\mathcal R \right)^LD_L\varepsilon,
\end{eqnarray*}
where $c_2'=2c(1+2^{d+1})$.  Since Lemma \ref{Lemma:covering for matrix}
implies
\begin{eqnarray*}
     &&\mathcal N\left(\frac{\varepsilon}{\left(c_2'\mathcal R \right)^{L}D_L},\vec{\mathcal A}_{\mathcal
     F_{L,a}},\ell_1^{d_L}\right)\\
      &\leq& \left(c_2'\mathcal R\right)^{(L+1)\mathcal
      F_{L,a}}D_L^{2\mathcal F_{L,a}}
      \varepsilon^{-\mathcal F_{L,a}},
\end{eqnarray*}
there holds
\begin{eqnarray}\label{First layer}
     &&\mathcal N\left( \varepsilon,\mathcal H_{n,L,\sigma,\mathcal
       R},L_1(\mathbb I^d)\right)\nonumber\\
     &\leq& \left(c_2'\mathcal R\right)^{(L+1)\mathcal
      F_{L,a}}D_L^{2\mathcal F_{L,a}}
      \varepsilon^{-\mathcal F_{L,a}}\nonumber\\
     &\times& \mathcal N\left(\frac{\varepsilon}{\left(c_2'\mathcal R \right)^LD_L},\vec{\mathcal H}_{L},\|\cdot\|_{*,d_L}\right).
\end{eqnarray}
We then use Lemma \ref{Lemma:Iterated covering number} to estimate
the second part of the above term. Let
\begin{equation}\label{def.b}
\begin{split}
        B_\ell&:=2(\max\{c_1',c_2'\}\mathcal R)^\ell D_\ell^2D_{\ell+1},
        \ell=1,2,\dots,L-1,\\
        B_L&:=2(\max\{c_1',c_2'\}\mathcal R)^L D_L^2,\\
        B_{L+1}&:=2(\max\{c_1',c_2'\}\mathcal R)^{L+1} D_L^2.
\end{split}
\end{equation}
Then, the first estimate of Lemma \ref{Lemma:Iterated covering
number} shows
\begin{eqnarray*}
   &&\mathcal N(\varepsilon,\vec{\mathcal
   H}_\ell,\|\cdot\|_{*,d_\ell})
   \leq
       B_\ell^{\mathcal F_\ell} \varepsilon^{-\mathcal F_\ell} \\
       &\times&\mathcal N\left(  \frac{\varepsilon}{B_{\ell-1}},\vec{\mathcal
      H}_{\ell-1},\|\cdot\|_{*,d_{\ell-1}}\right),\qquad
      \ell=L,\dots,2.
\end{eqnarray*}
Using the above inequality iteratively with $\ell=L,L-1,\dots,2$, we
obtain
\begin{eqnarray*}
   &&\mathcal N\left(   {\frac{\varepsilon}{\left(c_2'\mathcal R \right)^LD_L}},\vec{\mathcal
      H}_{L},\|\cdot\|_{*,d_L}\right)\\
      &\leq&
      \left(\prod_{\ell=2}^{L-1}B_\ell^{\mathcal F_\ell}\right)
      \left(\varepsilon^{-\sum_{\ell=2}^L\mathcal F_\ell}\right)\mathcal N\left(   \frac{\varepsilon}{\prod_{\ell=1}^LB_\ell},\vec{\mathcal
      H}_{1},\|\cdot\|_{*,d_{1}}\right)\\
      &\times&
      B_L^{\mathcal F_L}(B_LB_{L-1})^{\mathcal F_{L-1}}\cdots
      (B_L\cdots B_2)^{\mathcal F_2}\\
     &=&
      \left(\prod_{\ell=2}^{L-1}B_\ell^{\mathcal F_\ell}\right)\left(\prod_{\ell=2}^L
      B_\ell^{\sum_{j=2}^\ell\mathcal F_j}\right)\varepsilon^{-\sum_{\ell=2}^L\mathcal
      F_\ell}\\
      &\times&
      \mathcal N\left(   \frac{\varepsilon}{\prod_{\ell=1}^LB_\ell},\vec{\mathcal
      H}_{1},\|\cdot\|_{*,d_{1}}\right)
\end{eqnarray*}
But the second estimate in Lemma \ref{Lemma:Iterated covering
number} and the definition of $B_\ell$ yield
\begin{eqnarray*}
    \mathcal N\left(
     \frac{\varepsilon}{\prod_{\ell=1}^LB_\ell},\vec{\mathcal
      H}_{1},\|\cdot\|_{*,d_{1}}\right)
       \leq
     \left(  B_1\prod_{\ell=1}^LB_\ell\right)^{\mathcal
     F_1}\varepsilon^{-\mathcal F_1}.
\end{eqnarray*}
Then,
\begin{eqnarray}\label{final e}
   &&\mathcal N\left(   {\frac{\varepsilon}{\left(c_2'\mathcal R \right)^LD_L}},\vec{\mathcal
      H}_{L},\|\cdot\|_{*,d_L}\right) \nonumber \\
      &\leq&
       \left(\prod_{\ell=1}^{L-1}B_\ell^{\mathcal F_\ell}\right) \left(\prod_{\ell=1}^L
      B_\ell^{\sum_{j=1}^\ell\mathcal F_j}\right) \varepsilon^{-\sum_{\ell=1}^L\mathcal
      F_\ell}
\end{eqnarray}
Inserting the above estimate into (\ref{First layer}), we have
\begin{eqnarray*}
 &&\mathcal N\left( \varepsilon,\mathcal H_{n,L,\sigma,\mathcal
       R\}},L_1(\mathbb I^d)\right)\\
     &\leq& B_{L+1}^{\mathcal F_{L,a}}
         \prod_{\ell=1}^L
      B_\ell^{\mathcal F_\ell+\sum_{j=1}^\ell\mathcal F_j}\varepsilon^{-\sum_{\ell=1}^L\mathcal
      F_\ell-\mathcal F_{L,a}}.
\end{eqnarray*}
It follows from (\ref{def.b}) that
$$
        \max_{1\leq \ell\leq L+1}
        B_\ell\leq B_{L+1}D_L.
$$
Then,
\begin{eqnarray*}
 &&\mathcal N\left( \varepsilon,\mathcal H_{n,L,\sigma,\mathcal
       R\}},L_1(\mathbb I^d)\right)\\
     &\leq&
    (B_{L+1}D_L)^{\mathcal F_{L,a}+(L+1)\sum_{\ell=1}^L\mathcal
    F_{\ell}}\varepsilon^{-n}\\
    &\leq&
    (B_{L+1}D_L)^{(L+1)n}\varepsilon^{-n}.
\end{eqnarray*}
This together with (\ref{def.b}) yields
\begin{eqnarray*}
  \mathcal N\left( \varepsilon,\mathcal H_{n,L,\sigma,\mathcal
       R\}},L_1(\mathbb I^d)\right)
     \leq
   \left((c_3\mathcal R)^{L+1}D^3_L\right)^{(L+1)n}\varepsilon^{-
   n},
\end{eqnarray*}
where $c_3=2\max\{c_1',c_2'\}$
 This completes the proof
of Theorem \ref{Theorem:covering number} by noting $D_L\leq
D_{\max}^{L+1}$.
\end{proof}

\section*{Appendix B: Covering Numbers and Approximation}

The main tool in our analysis is
  a relation between   covering numbers
 and lower bounds of approximation, which is presented in the following theorem.

\begin{theorem}\label{theorem:Relation c and l}
Let $n\in\mathbb N$ and $V\subseteq L_1(\mathbb I^d)$. For arbitrary
$\varepsilon>0$, if
\begin{equation}\label{covering condition}
  \mathcal N(\varepsilon,V)\leq
  \tilde{C}_1\left(\frac{\tilde{C_2}n^\beta}{\varepsilon}\right)^n
\end{equation}
with $\beta,\tilde{C}_1,\tilde{C}_2>0$, then
\begin{equation}\label{lower bound deep}
   \mbox{dist}(Lip^{(r,c_0)},V,L_1(\mathbb I^d))\geq C'
       (n\log_2(n+1))^{-r/d},
\end{equation}
where
\begin{eqnarray*}
     C'&:=&\frac14 d^{-d/2}\left[32(1+\beta+3r/d)\left(\log_2(2\tilde{C}_1\right.\right.\\
     &+&
     \left.\left.8d^{d/2}(1+\beta+3r/d+\tilde{C}_2))+1\right)\right]^{-\frac{r}{d}}.
\end{eqnarray*}
\end{theorem}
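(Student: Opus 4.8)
The plan is to run the classical packing-versus-covering (Varshamov--Gilbert / Fano) lower-bound scheme, keeping every constant explicit so as to recover the stated $C'$. One could instead quote the known two-sided estimate $\log\mathcal N(\varepsilon, Lip^{(r,c_0)})\asymp \varepsilon^{-d/r}$ together with the elementary observation that a $\delta$-net of $V$ is a $2\delta$-net of $Lip^{(r,c_0)}$ whenever $\mbox{dist}(Lip^{(r,c_0)},V,L_1(\mathbb I^d))\le\delta$; but since the theorem asks for an explicit $C'$, I would carry out the construction by hand. First, fix a nonnegative $C^\infty$ bump $\phi$ supported in the open unit cube with $\int_{\mathbb I^d}\phi>0$ and bounded partial derivatives up to order $s+1$. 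For $m\in\mathbb N$ partition $\mathbb I^d=[-1,1]^d$ into $m^d$ congruent closed subcubes $Q_1,\dots,Q_{m^d}$ of side $2/m$, let $\phi_i$ be the affine copy of $\phi$ carried onto $Q_i$ (so $\phi_i$ is supported in the interior of $Q_i$ and $\|\phi_i\|_{L_1}=(2/m)^d\|\phi\|_{L_1}$), and for $\eta\in\{0,1\}^{m^d}$ set
\[
    f_\eta := \gamma\, m^{-r}\sum_{i=1}^{m^d}\eta_i\,\phi_i .
\]
The $s$-th partial derivatives of $\phi_i$ grow like $m^s$ and their $v$-Hölder seminorm like $m^{s+v}=m^r$, while distinct bumps have disjoint supports and vanish to infinite order on the cube boundaries, so the factor $\gamma m^{-r}$ exactly cancels this growth and $f_\eta\in Lip^{(r,c_0)}$ for every $\eta$ once the absolute constant $\gamma=\gamma(\phi,c_0,r,d)$ is small enough; the $d^{d/2}$ appearing in $C'$ enters here, through $\|x-x'\|\le\sqrt d\,\|x-x'\|_\infty$ in the cross-cube Hölder inequality.

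Next I would invoke the Varshamov--Gilbert lemma to extract $\Lambda\subseteq\{0,1\}^{m^d}$ with $|\Lambda|\ge 2^{m^d/8}$ and Hamming distance $\rho_H(\eta,\eta')\ge m^d/8$ for distinct $\eta,\eta'\in\Lambda$. Since the summands in $f_\eta-f_{\eta'}$ are disjointly supported,
\[
    \|f_\eta-f_{\eta'}\|_{L_1}=\gamma m^{-r}(2/m)^d\|\phi\|_{L_1}\,\rho_H(\eta,\eta')\ \ge\ 4\delta_m,\qquad \delta_m:=\tfrac{\gamma\, 2^d\|\phi\|_{L_1}}{32}\,m^{-r}.
\]
Now the core duality: suppose $\mbox{dist}(Lip^{(r,c_0)},V,L_1(\mathbb I^d))<\delta_m$. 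Then every $f_\eta$, $\eta\in\Lambda$, admits $g_\eta\in V$ with $\|f_\eta-g_\eta\|_{L_1}<\delta_m$, and by the triangle inequality the $g_\eta$ are pairwise strictly more than $2\delta_m$ apart. Every ball of radius $\delta_m$ has diameter $2\delta_m$ and hence contains at most one $g_\eta$, so any $\delta_m$-net of $V$ has at least $|\Lambda|\ge 2^{m^d/8}$ elements; that is, $\mathcal N(\delta_m,V)\ge 2^{m^d/8}$.

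It remains to choose $m$ so as to collide with the hypothesis. Combining $\mathcal N(\delta_m,V)\ge 2^{m^d/8}$ with \eqref{covering condition} and taking base-$2$ logarithms yields
\[
    \tfrac{m^d}{8}\ \le\ \log_2\tilde C_1 + n\log_2\!\big(\tfrac{32\,\tilde C_2\, n^\beta\, m^r}{\gamma\, 2^d\,\|\phi\|_{L_1}}\big).
\]
The right-hand side grows only like $rn\log_2 m\le \tfrac{r}{d}\,n\log_2(m^d)$ plus lower-order terms, so this inequality is \emph{violated} once $m^d$ exceeds an appropriate multiple of $n\log_2(n+1)$; concretely I would take $m$ to be the least integer with $m^d$ exceeding $8(1+\beta+3r/d)\big(\log_2\!\big(2\tilde C_1+8d^{d/2}(1+\beta+3r/d+\tilde C_2)\big)+1\big)\,n\log_2(n+1)$, and verify by a short computation (using $\log_2 m\le\tfrac1d\log_2(m^d)$ to close the self-reference) that for this $m$ the displayed bound fails. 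Hence for this $m$ we must have $\mbox{dist}(Lip^{(r,c_0)},V,L_1(\mathbb I^d))\ge\delta_m\asymp(n\log_2(n+1))^{-r/d}$, and tuning $\phi$ and $\gamma$ so that $\tfrac{\gamma\, 2^d\|\phi\|_{L_1}}{32}=\tfrac14 d^{-d/2}$ and tracking each constant through the two displays gives exactly the stated value of $C'$.

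The routine ingredients (checking $f_\eta\in Lip^{(r,c_0)}$ together with the cross-cube Hölder estimate, and the Varshamov--Gilbert lemma) are standard. The only delicate point, and the main obstacle, is the last step: $m$ is defined implicitly through an inequality in which $m^d$ occurs on both sides, and one must propagate every multiplicative constant faithfully through the logarithm — balancing $m^d/8$ against $rn\log_2 m+\beta n\log_2 n+\log_2\tilde C_1+n\log_2\tilde C_2$ — in order to land on the claimed $C'$ and not merely on an order-optimal bound.
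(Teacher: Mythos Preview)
Your proposal is correct and follows essentially the same route as the paper: build a family of disjointly supported rescaled bumps indexed by sign vectors, extract a well-separated subfamily via a Varshamov--Gilbert lemma, project into $V$ to obtain a large packing whenever the distance is small, and then choose the grid resolution so that the resulting packing bound collides with the covering hypothesis \eqref{covering condition}. The only cosmetic difference is that the paper fixes its bump $g$ to equal $1$ on the inner cube $[-1/(2\sqrt d),1/(2\sqrt d)]^d$ with support in $[-1/\sqrt d,1/\sqrt d]^d$, so the factor $d^{-d/2}$ arises directly from the volume of that inner cube rather than through a posteriori tuning of $\phi$ and $\gamma$; this decouples the $L_1$-separation constant from the Lipschitz normalisation and makes the explicit value of $C'$ easier to read off.
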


We postpone the proof of Theorem \ref{theorem:Relation c and l} to
the end of this section. Theorem \ref{theorem:Relation c and l}
shows that to approximate functions in $Lip^{(r,c_0)}$, the capacity
of the approximations, measured by the covering number, plays a
crucial role.  To present the limitations of deep nets,  Theorem
\ref{theorem:Relation c and l} implies that  we only need to
estimate their covering numbers. We highlight that Theorem
\ref{theorem:Relation c and l} is motivated by \cite{Maiorov1999c},
in which a relation between the so-called pseudo-dimensions and
lower bounds of approximation is established. However, estimating
  pseudo-dimensions of classes of functions is not so easy, even
for shallow nets \cite{Maiorov2006}.

 To prove Theorem
\ref{theorem:Relation c and l}, we need the following four technical
lemmas. At first, we introduce  the definition of the
$\varepsilon$-packing number (see \cite{Zhou2002,Zhou2003}) by
\begin{eqnarray*}
   &&\mathcal M(\varepsilon,V,B)\\
   &&=\max\{m:\exists f_1,\dots, f_m\in B,
   \|f_i-f_j\|_B\geq\varepsilon,\forall i\neq j\}.
\end{eqnarray*}
We also denote $\mathcal M(\varepsilon,V):=\mathcal
M(\varepsilon,V,L_1(\mathbb I^d))$. The following lemma which was
proved in \cite[Lemma 9.2]{Gyorfi2002} establishes a relation
between $\mathcal N(\varepsilon,V)$ and $\mathcal M(\varepsilon,V)$.

\begin{lemma}\label{Lemma:covering and packing}
      For arbitrary $\varepsilon>0$ and  $V\subseteq L_1(\mathbb I^d)$, there holds
$$
     \mathcal M(2\varepsilon,V)\leq \mathcal N(\varepsilon,V)\leq \mathcal
     M(\varepsilon,V).
$$
\end{lemma}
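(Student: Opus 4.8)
The plan is to prove the two inequalities separately; each is a classical comparison between $\varepsilon$-nets and $\varepsilon$-separated subsets of $V$, and the only tool needed is the triangle inequality for $\|\cdot\|_{L_1(\mathbb I^d)}$. Throughout I would assume $\mathcal M(\varepsilon,V)$ and $\mathcal M(2\varepsilon,V)$ are finite, since otherwise the relevant covering numbers are infinite as well and the claimed inequalities are vacuous.

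\emph{Right inequality $\mathcal N(\varepsilon,V)\le\mathcal M(\varepsilon,V)$.} First I would fix a family $\{f_1,\dots,f_m\}\subseteq V$ of maximal cardinality $m=\mathcal M(\varepsilon,V)$ among all $\varepsilon$-separated families, i.e. $\|f_i-f_j\|_{L_1(\mathbb I^d)}\ge\varepsilon$ for $i\neq j$. By maximality, for every $f\in V$ the enlarged family $\{f_1,\dots,f_m,f\}$ fails to be $\varepsilon$-separated, so there is an index $i$ with $\|f-f_i\|_{L_1(\mathbb I^d)}<\varepsilon$. Hence $\{f_1,\dots,f_m\}$ is already an $\varepsilon$-net of $V$, so $\mathcal N(\varepsilon,V)\le m=\mathcal M(\varepsilon,V)$.

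\emph{Left inequality $\mathcal M(2\varepsilon,V)\le\mathcal N(\varepsilon,V)$.} Next I would take a minimal $\varepsilon$-net $\{g_1,\dots,g_N\}$ of $V$ with $N=\mathcal N(\varepsilon,V)$, and any $2\varepsilon$-separated family $\{f_1,\dots,f_m\}\subseteq V$ with $m=\mathcal M(2\varepsilon,V)$. For each $i$ pick an index $j(i)$ with $\|f_i-g_{j(i)}\|_{L_1(\mathbb I^d)}\le\varepsilon$. If $j(i)=j(i')$ for some $i\neq i'$, the triangle inequality would give $\|f_i-f_{i'}\|_{L_1(\mathbb I^d)}\le\|f_i-g_{j(i)}\|_{L_1(\mathbb I^d)}+\|g_{j(i)}-f_{i'}\|_{L_1(\mathbb I^d)}\le 2\varepsilon$, which contradicts the $2\varepsilon$-separation; thus $i\mapsto j(i)$ is injective and $m=\mathcal M(2\varepsilon,V)\le N=\mathcal N(\varepsilon,V)$.

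There is no substantial obstacle here; the only points requiring care are the existence of a \emph{maximal} $\varepsilon$-separated family (automatic once $\mathcal M(\varepsilon,V)<\infty$, since then the supremum defining the packing number is attained) and the pigeonhole step, where one invokes the usual convention regarding strict versus non-strict inequalities in the definitions of covering and packing numbers (as in \cite{Gyorfi2002}), so that two packing points sharing the same net element genuinely violates the separation condition rather than merely meeting it with equality.
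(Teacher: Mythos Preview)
Your argument is correct and is exactly the standard proof of this classical inequality. The paper does not actually supply its own proof of this lemma; it merely cites \cite[Lemma~9.2]{Gyorfi2002}, so there is nothing to compare against beyond noting that your proof is the textbook one.
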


For arbitrary $N^*\in \mathbb N$, denote
$E^{(N^*)^d}:=\{\epsilon=(\epsilon_1,\dots,\epsilon_{(N^*)^d}):
\epsilon_i\in\{-1,1\},1\leq i\leq (N^*)^d\}$. The following lemma
can be found in \cite[P.489]{Lorentz1996} (see also\cite[Claim
1]{Maiorov1999c}).

\begin{lemma}\label{Lemma: Tools for vc}
For arbitrary $N^*\in \mathbb N$, there exists a set
$G^{(N^*)^d}\subset E^{(N^*)^d}$ with  $|G^{(N^*)^d}|\geq
2^{(N^*)^d/16}$ such that for any $v,v'\in G^{(N^*)^d}$ with $v\neq
v'$, there holds $\|v-v'\|_{\ell_1}\geq (N^*)^d/2$, where
$\|v\|_{\ell_1}=\sum_{i=1}^{(N^*)^d}|v_i|$ for
$v=(v_1,\dots,v_{(N^*)^d})$ and $|G^{(N^*)^d}|$ denotes the
cardinality of $G^{(N^*)^d}$.
\end{lemma}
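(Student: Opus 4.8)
The plan is to recognize the statement as a Gilbert--Varshamov bound for binary codes. Write $M:=(N^*)^d$ and identify $E^{(N^*)^d}$ with the hypercube $\{-1,1\}^M$. For any $v,v'\in\{-1,1\}^M$ each coordinate contributes $0$ to $\|v-v'\|_{\ell_1}$ when $v_i=v_i'$ and $2$ when $v_i\neq v_i'$, so $\|v-v'\|_{\ell_1}=2\,d_H(v,v')$, where $d_H(v,v')$ denotes the Hamming distance, i.e.\ the number of coordinates in which $v$ and $v'$ differ. Hence the required separation $\|v-v'\|_{\ell_1}\geq M/2$ is exactly the condition $d_H(v,v')\geq M/4$, and the task reduces to exhibiting a subset of $\{-1,1\}^M$ of cardinality at least $2^{M/16}$ whose distinct elements are pairwise at Hamming distance at least $M/4$.

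First I would construct $G^{(N^*)^d}$ by a greedy (maximal packing) argument. Since $\{-1,1\}^M$ is finite, let $G^{(N^*)^d}$ be a subset that is \emph{maximal} with respect to the property that any two of its distinct elements satisfy $d_H\geq M/4$; such a maximal set is obtained by repeatedly adjoining admissible points until none remains. Maximality forces every point of $\{-1,1\}^M$ to lie within Hamming distance strictly less than $M/4$ of some element of $G^{(N^*)^d}$, for otherwise that point could itself be adjoined. Consequently the closed Hamming balls of radius $\lfloor M/4\rfloor$ centered at the elements of $G^{(N^*)^d}$ cover the whole cube, which yields the counting inequality
\begin{equation*}
  |G^{(N^*)^d}|\cdot\sum_{k=0}^{\lfloor M/4\rfloor}\binom{M}{k}\ \geq\ 2^{M}.
\end{equation*}

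The remaining and main quantitative step is to bound the Hamming-ball volume $\sum_{k\le \lfloor M/4\rfloor}\binom{M}{k}$ from above. Here I would invoke the standard binary-entropy estimate: for $0<\lambda\le 1/2$ one has $\sum_{k=0}^{\lfloor\lambda M\rfloor}\binom{M}{k}\le 2^{H(\lambda)M}$, where $H(\lambda):=-\lambda\log_2\lambda-(1-\lambda)\log_2(1-\lambda)$ is the binary entropy function. Taking $\lambda=1/4$ gives $\sum_{k\le M/4}\binom{M}{k}\le 2^{H(1/4)M}$, and substituting into the counting inequality yields $|G^{(N^*)^d}|\ge 2^{(1-H(1/4))M}$. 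The proof then closes by the numerical verification that $1-H(1/4)>1/16$: since $H(1/4)=\tfrac12+\tfrac34\log_2\tfrac43\approx 0.811<\tfrac{15}{16}$, we obtain $|G^{(N^*)^d}|\ge 2^{M/16}=2^{(N^*)^d/16}$, as required.

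The obstacle I anticipate is purely bookkeeping around this entropy bound rather than anything conceptual: justifying $\sum_{k\le\lambda M}\binom{M}{k}\le 2^{H(\lambda)M}$ cleanly (for instance by comparing the binomial expansion of $1=(\lambda+(1-\lambda))^M$ term by term, or via a Chernoff/Stirling estimate) and handling the rounding of $M/4$ to an integer for small $M$. None of these is deep; the entire content of the lemma is the Gilbert--Varshamov counting, and once the constant inequality $1-H(1/4)>1/16$ is checked the conclusion follows immediately.
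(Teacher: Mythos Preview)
Your argument is correct: the reduction to Hamming distance via $\|v-v'\|_{\ell_1}=2d_H(v,v')$, the maximal-packing covering inequality, the entropy bound on Hamming-ball volume, and the numerical check $1-H(1/4)\approx 0.189>1/16$ are all standard and valid. Note, however, that the paper does not supply its own proof of this lemma; it simply cites it as a known fact from \cite[P.489]{Lorentz1996} (and \cite[Claim 1]{Maiorov1999c}), so there is no in-paper argument to compare against---your Gilbert--Varshamov derivation is precisely the classical proof underlying those references.
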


Define $g:\mathbb R\rightarrow\mathbb R$ such that
$supp(g)\subseteq[-1/\sqrt{d},1/\sqrt{d}]^d$, $g(x)=1$ for
$x\in[-1/(2\sqrt{d}),1/(2\sqrt{d})]^d$ and $g\in
Lip^{(r,c_02^{v-1})}$, where $supp(g)$ denotes the support of $g$.
Partition $\mathbb I^d$ by ${(N^*)^d}$ sub-cubes
$\{A_k\}_{k=1}^{(N^*)^d}$ of side length $ 1/N^*$ and with centers
$\{\xi_k\}_{k=1}^{(N^*)^d}$. For arbitrary $x\in\mathbb I^d$, define
\begin{equation}\label{Def.gk}
          g_k(x):=({N^*})^{-r}g({N^*}(x-\xi_k))
\end{equation}
and
\begin{equation}\label{subset}
     \mathcal F_{G^{(N^*)^d}}:=\left\{
     \sum_{k=1}^{(N^*)^d}\epsilon_kg_k(x):\epsilon=(\epsilon_1,\dots,\epsilon_{(N^*)^d})\in
      G^{(N^*)^d}\right\}.
\end{equation}
The following lemma shows that $\mathcal F_{G^{(N^*)^d}}\subset
Lip^{(r,c_0)}$

\begin{lemma}\label{Lemma:Subset}
For arbitrary $N^*\in\mathbb N$,   we have
$$
     \mathcal F_{ G^{(N^*)^d}}\subset Lip^{(r,c_0)},
$$
where $G^{(N^*)^d}$ is defined in Lemma \ref{Lemma: Tools for vc}.
\end{lemma}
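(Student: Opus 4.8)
The plan is to verify directly that every element $f=\sum_{k=1}^{(N^*)^d}\epsilon_k g_k$ of $\mathcal F_{G^{(N^*)^d}}$ (with $\epsilon\in G^{(N^*)^d}$) meets Definition \ref{definition:lip for d} with constant $c_0$: that $f$ is $s$-times differentiable on $\mathbb I^d$ and that every $s$-th partial derivative $D^\alpha f$ ($\alpha\in\mathbb N_0^d$, $|\alpha|=s$) is $v$-Hölder with constant $c_0$. The first step is a scaling identity for one bump. Since $g\in Lip^{(r,c_02^{v-1})}$ with $r=s+v$, differentiating $g_k(x)=(N^*)^{-r}g(N^*(x-\xi_k))$ by the chain rule gives $D^\alpha g_k(x)=(N^*)^{s-r}(D^\alpha g)(N^*(x-\xi_k))=(N^*)^{-v}(D^\alpha g)(N^*(x-\xi_k))$, so that
\begin{equation*}
 |D^\alpha g_k(x)-D^\alpha g_k(x')|\leq (N^*)^{-v}\,c_02^{v-1}\,\|N^*(x-x')\|^v=c_02^{v-1}\|x-x'\|^v .
\end{equation*}
Thus every $g_k$ is $s$-times differentiable on $\mathbb R^d$ with all its $s$-th partials $v$-Hölder with constant $c_02^{v-1}$; the normalisation $(N^*)^{-r}$ in (\ref{Def.gk}) is precisely what forces the powers of $N^*$ to cancel.

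Next I would record the support geometry and its consequences. By construction $supp(g_k)=\xi_k+(N^*)^{-1}supp(g)\subseteq A_k$; the cubes $A_k$ have pairwise disjoint interiors and cover $\mathbb I^d$; and, $g$ being $C^s$ and $\equiv 0$ outside $[-1/\sqrt d,1/\sqrt d]^d$, all partials of $g$ of order $\leq s$ vanish on the boundary of that cube. Hence: $f$ is a finite sum of $C^s$ functions, so $f\in C^s(\mathbb I^d)$; for every $x\in\overline{A_k}$ we have $D^\alpha f(x)=\epsilon_k D^\alpha g_k(x)$ (all other $g_j$ vanish identically near $x$, except possibly on a shared cube face, where the relevant $D^\alpha g_j$ vanish anyway); and in particular $D^\alpha f\equiv 0$ on $\bigcup_k\partial A_k$.

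The main work is to upgrade the per-bump constant $c_02^{v-1}$ to the sharp constant $c_0$ for the sum. Fix $x,x'\in\mathbb I^d$. If $x$ and $x'$ lie in a common closed cube $\overline{A_k}$, then $|D^\alpha f(x)-D^\alpha f(x')|=|D^\alpha g_k(x)-D^\alpha g_k(x')|\leq c_02^{v-1}\|x-x'\|^v\leq c_0\|x-x'\|^v$ because $2^{v-1}\leq1$. Otherwise choose $k,k'$ with $x\in\overline{A_k}$, $x'\in\overline{A_{k'}}$, let $p$ be the last point of the segment $[x,x']$ in $\overline{A_k}$ and $q$ the first point in $\overline{A_{k'}}$ (both then lie on cube faces), and relabel them $w_1,w_2$ in the order they occur from $x$ to $x'$, so that $[x,w_1]\subseteq\overline{A_k}$ and $[w_2,x']\subseteq\overline{A_{k'}}$ by convexity. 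Using $D^\alpha f=\epsilon_k D^\alpha g_k$ on $[x,w_1]$ and $D^\alpha f=\epsilon_{k'}D^\alpha g_{k'}$ on $[w_2,x']$, together with $D^\alpha f(w_1)=D^\alpha f(w_2)=0$,
\begin{align*}
 |D^\alpha f(x)-D^\alpha f(x')|
 &\leq |D^\alpha f(x)-D^\alpha f(w_1)|+|D^\alpha f(w_1)-D^\alpha f(w_2)|+|D^\alpha f(w_2)-D^\alpha f(x')|\\
 &= |D^\alpha g_k(x)-D^\alpha g_k(w_1)|+0+|D^\alpha g_{k'}(w_2)-D^\alpha g_{k'}(x')|\\
 &\leq c_02^{v-1}\big(\|x-w_1\|^v+\|w_2-x'\|^v\big).
\end{align*}
Because $x,w_1,w_2,x'$ are collinear in that order, $\|x-w_1\|+\|w_2-x'\|\leq\|x-x'\|$, and the concavity of $t\mapsto t^v$ gives the elementary bound $a^v+b^v\leq2^{1-v}(a+b)^v$ for $a,b\geq0$, so $\|x-w_1\|^v+\|w_2-x'\|^v\leq2^{1-v}\|x-x'\|^v$. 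Since $2^{v-1}\cdot2^{1-v}=1$, this yields $|D^\alpha f(x)-D^\alpha f(x')|\leq c_0\|x-x'\|^v$, proving $f\in Lip^{(r,c_0)}$ and hence the lemma.

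The hard part will be exactly this last constant bookkeeping — it is the reason $g$ is taken in $Lip^{(r,c_02^{v-1})}$ rather than in $Lip^{(r,c_0)}$, and why one must insert an intermediate point where $D^\alpha f$ vanishes rather than bounding crudely by $|D^\alpha f(x)|+|D^\alpha f(x')|$ — together with a little care over degenerate geometry (a segment running along a shared cube face) when defining $p,q$ and checking that $D^\alpha f$ vanishes there. The scaling identity and the support bookkeeping are routine.
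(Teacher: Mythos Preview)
Your proposal is correct and follows essentially the same argument as the paper: a scaling computation giving each bump the H\"older constant $c_02^{v-1}$, disjoint-support localisation, a same-cube case handled trivially, and a cross-cube case handled by inserting boundary points where the derivative vanishes and then invoking concavity of $t\mapsto t^v$ (the paper phrases this last step as Jensen's inequality). Your bookkeeping with $w_1,w_2$ and the explicit inequality $a^v+b^v\le 2^{1-v}(a+b)^v$ is a slightly cleaner packaging of the same idea.
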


\begin{proof} Let $\vec{\alpha}=(\alpha_1,\cdots,\alpha_d)$. Denote by
$$
         f^{(\vec{\alpha})}(x)=\frac{\partial^sf}{\partial x_1^{\alpha_1}\dots\partial
          x_d^{\alpha_d}}
          (x)
$$
for every $\alpha_j\in \mathbb N_0$, $j=1,\dots,d$ with
$\alpha_1+\dots+\alpha_d=s$. Since
\begin{equation}\label{orthogonal tools}
          \| {N^*} (x-\xi_k)-{N^*} (x-\xi_{k'})\|
          = N^* \|\xi_k-\xi_{k'}\|\geq1,\qquad \forall\ k\neq k',
\end{equation}
 $ {N^*}  (x-\xi_k)$ and $ {N^*} (x-\xi_{k'})$ do not belong to the
 set
$(-1/\sqrt{d},1/\sqrt{d})^d$ simultaneously. Then it follows from
$supp(g)\subseteq [-1/\sqrt{d},1/\sqrt{d}]^d$ that for arbitrary
$x\in\mathbb I^d$, there is at most one $k\in\{1,2,\dots,(N^*)^d\}$
such that $ g_k(x)\neq0, g^{(\vec{\alpha})}_{k}(x)\neq 0$, that is,
\begin{equation}\label{localization}
     g_k(x)=0,\ g^{(\vec{\alpha})}_{k}(x)=0,\qquad \mbox{if}\ x\in A_{k'}\ \mbox{with}\ k'\neq k.
\end{equation}
 If $x,x'\in A_{k_0}$ for some
$k_0\in\{1,\dots,(N^*)^d\}$,
       then
        $g_{k}(x)=0$ for $k\neq k_0$. So, for each  $f\in\mathcal F_{G^{(N^*)^d}}$, we  get from
$|\epsilon_k|=1$, (\ref{Def.gk}) and $g\in Lip^{(r,c_02^{v-1})}$
with $r=s+v$ and $0<v\leq 1$ that
\begin{eqnarray*}
      &&|f^{(\vec{\alpha})}(x)-f^{(\vec{\alpha})}(x')|
      =\left|\sum_{k=1}^{(N^*)^d}\epsilon_{ k}[g^{(\vec{\alpha})}_{k}(x)-g^{(\vec{\alpha})}_{
      k}(x')]\right|\\
      &=&|g^{(\vec{\alpha})}_{k_0}(x)-g^{(\vec{\alpha})}_{
      k_0}(x')|
      \\
       &=&
       (N^*)^{-r+s}\left|
       [g^{(\vec{\alpha})}(N^*(x-\xi_{k_0})-g^{(\vec{\alpha})}(N^*(x'-\xi_{k_0})]\right|\\
       &\leq&
      c_0 2^{v-1}\|x-x'\|^v
       \leq
     c_0\|x-x'\|^v.
\end{eqnarray*}
If $x\in A_{k_1}$ but $x'\in A_{k_2}$ for some
$k_1,k_2\in\{1,\dots,(N^*)^d\}$  with $k_1\neq k_2$, we
      can
choose $z\in \partial A_{k_1}$ and $z'\in \partial A_{k_2}$ such
that $z,z'$ are on the segment between $x$ and ¡¡$x'$, where
$\partial A$ denotes the boundary of the sub-cube $A$. Then
$$
           \|x-z\|+\|x'-z'\|\leq \|x-x'\|.
$$
Due to the fact that $supp(g)\subseteq [-1/\sqrt{d},1/\sqrt{d}]^d$, $g$ is smooth
on $\mathbb R^d$ and (\ref{Def.gk}), we get
\begin{equation}\label{value of boundary}
    g^{(\vec{\alpha})}_{k_1}(z)=g^{(\vec{\alpha})}_{k_2}(z')=0.
\end{equation}
So,  $g\in Lip^{(r,c_02^{v-1})}$ with $0<v\leq 1$  and Jensen's
inequality yield
\begin{eqnarray*}
        &&|f^{(\vec{\alpha})}(x)-f^{(\vec{\alpha})}(x')|
      =\left|\sum_{k=1}^{(N^*)^d}\epsilon_{k}[g^{(\vec{\alpha})}_{ k}(x)-g^{(\vec{\alpha})}_{
      k}(x')]\right|\\
       &\leq&
      \left|g^{(\vec{\alpha})}_{k_1}(x)\right|+\left|g^{(\vec{\alpha})}_{k_2}(x')\right|\\
      &=&
      \left|g^{(\vec{\alpha})}_{k_1}(x)-g^{(\vec{\alpha})}_{k_1}(z)\right|+\left|g^{(\vec{\alpha})}_{k_2}(x')- g^{(\vec{\alpha})}_{k_2}(z')
      \right|\\
       &\leq&
      (N^*)^{d-r}\left[|g^{(\vec{\alpha})}( N^* (x-\xi_{k_1}))-g^{(\vec{\alpha})}( N^*
      (z-\xi_{k_1}))|\right.\\
       &+&
      \left.|g^{(\vec{\alpha})}(N^* (x'-\xi_{k_2}))-g^{(\vec{\alpha})}( N^* (z'-\xi_{k_2}))|\right]\\
      &\leq&
      c_02^v\left[\frac{\|x-z\|^v}2+\frac{\|x'-z'\|^v}2\right]\\
      &\leq&
       c_02^v\left[\frac{\|x-z\|}2+\frac{\|x'-z'\|}2\right]^v\leq
       c_0\|x-x'\|^v.
\end{eqnarray*}
Both assertions yield  $f\in Lip^{(r,c_0)}$ and proves Lemma
\ref{Lemma:Subset}
\end{proof}

The last lemma describes the geometry of  $\mathcal
F_{G^{(N^*)^d}}$.

\begin{lemma}\label{Lemma:distance}
Let $N^*\in\mathbb N$ and $G^{(N^*)^d}$ be defined in Lemma
\ref{Lemma: Tools for vc}. For any $f\neq f_1\in\mathcal
F_{G^{(N^*)^d}}$, there holds
\begin{equation}\label{distance111}
           \|f-f_1\|_{L_1(\mathbb I^d)}\geq
           \frac12d^{-d/2}(N^*)^{-r}.
\end{equation}
\end{lemma}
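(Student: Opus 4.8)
The plan is to exploit the disjoint-support structure of the bump functions $g_k$ exactly as in the proof of Lemma \ref{Lemma:Subset}. Each $f \in \mathcal F_{G^{(N^*)^d}}$ has the form $f = \sum_{k=1}^{(N^*)^d} \epsilon_k g_k$ with $\epsilon \in G^{(N^*)^d}$, and by \eqref{localization} the function $g_k$ vanishes outside the sub-cube $A_k$ (more precisely, outside the support $\xi_k + [-1/(N^*\sqrt d), 1/(N^*\sqrt d)]^d$, which lies inside $A_k$ since the side length is $1/N^*$). Hence for $f = \sum_k \epsilon_k g_k$ and $f_1 = \sum_k \epsilon_k' g_k$, the difference splits as a sum over $k$ of terms supported on pairwise disjoint sets, so $\|f - f_1\|_{L_1(\mathbb I^d)} = \sum_{k=1}^{(N^*)^d} |\epsilon_k - \epsilon_k'|\,\|g_k\|_{L_1(\mathbb I^d)}$.

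First I would compute $\|g_k\|_{L_1(\mathbb I^d)}$. By the change of variables $y = N^*(x - \xi_k)$ and the definition \eqref{Def.gk}, $\|g_k\|_{L_1} = (N^*)^{-r}(N^*)^{-d}\|g\|_{L_1(\mathbb R^d)}$. Since $g \geq 0$ is not needed but $g \equiv 1$ on the cube $[-1/(2\sqrt d), 1/(2\sqrt d)]^d$, which has volume $(1/\sqrt d)^d = d^{-d/2}$, we get the lower bound $\|g\|_{L_1(\mathbb R^d)} \geq d^{-d/2}$ (assuming, as is implicit, that $g$ is nonnegative, or at least that $|g| \geq $ its values on that inner cube — if $g$ is not assumed nonnegative one instead bounds $\int |g| \geq \int_{\text{inner cube}} |g| = d^{-d/2}$). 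Therefore $\|g_k\|_{L_1(\mathbb I^d)} \geq d^{-d/2}(N^*)^{-r-d}$ for each $k$.

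Next I would use the packing property from Lemma \ref{Lemma: Tools for vc}: for distinct $\epsilon, \epsilon' \in G^{(N^*)^d}$ we have $\sum_k |\epsilon_k - \epsilon_k'| = \|\epsilon - \epsilon'\|_{\ell_1} \geq (N^*)^d/2$. Combining with the previous two displays, for $f \neq f_1$ (which forces $\epsilon \neq \epsilon'$),
\begin{eqnarray*}
   \|f - f_1\|_{L_1(\mathbb I^d)}
   = \sum_{k=1}^{(N^*)^d} |\epsilon_k - \epsilon_k'|\,\|g_k\|_{L_1(\mathbb I^d)}
   \geq d^{-d/2}(N^*)^{-r-d} \cdot \frac{(N^*)^d}{2}
   = \frac12 d^{-d/2}(N^*)^{-r},
\end{eqnarray*}
which is exactly \eqref{distance111}.

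The only point requiring care — the ``main obstacle,'' though a mild one — is justifying the exact additivity $\|\sum_k c_k g_k\|_{L_1} = \sum_k |c_k|\,\|g_k\|_{L_1}$, i.e. that the supports of the $g_k$ are genuinely pairwise disjoint up to a null set. This follows from \eqref{orthogonal tools}: the centers $\xi_k$ are separated by at least $1/N^*$ in Euclidean distance, while each $g_k$ is supported in a cube of side $2/(N^*\sqrt d) \le 1/N^*$ about $\xi_k$, so the supports overlap at most on their boundaries, a set of Lebesgue measure zero. Once that is in place, the computation above is routine.
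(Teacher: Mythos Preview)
Your proposal is correct and follows essentially the same route as the paper: both arguments use the localization property \eqref{localization} to reduce $\|f-f_1\|_{L_1(\mathbb I^d)}$ to $\sum_{k}|\epsilon_k-\epsilon_k'|\,\|g_k\|_{L_1}$, bound $\|g_k\|_{L_1}$ from below via the change of variables $y=N^*(x-\xi_k)$ together with $g\equiv 1$ on $[-1/(2\sqrt d),1/(2\sqrt d)]^d$, and then invoke Lemma~\ref{Lemma: Tools for vc}. One small caution: your auxiliary inequality $2/(N^*\sqrt d)\le 1/N^*$ in the disjointness justification only holds for $d\ge 4$, so you should rely directly on \eqref{localization} (as the paper does) rather than on that side-length comparison.
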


\begin{proof}
 For arbitrary $f,f_1\in \mathcal F_{ G^{(N^*)^d}}$ with $f\neq
f_1$, it follows from (\ref{subset}) that there exist
$\epsilon,\epsilon'\in G^{(N^*)^d}$ with $\epsilon\neq\epsilon'$
such that
\begin{eqnarray}\label{integral 1}
   \|f-f_1\|_{L_1(\mathbb I^d)}
    =\int_{\mathbb I^d}\left|\sum_{k=1}^{(N^*)^d}(\epsilon_k-\epsilon_k')g_k(x)\right|dx.
\end{eqnarray}
Since $g_k(x)=0$ for $x\in\partial A_{k}$, $k=1,2,\dots,(N^*)^d$, we
get from (\ref{integral 1}), (\ref{Def.gk}), (\ref{localization})
and $g\in Lip^{(r,c_02^{v-1})}$   that
\begin{eqnarray}\label{integral2}
  &&\|f-f_1\|_{L_1(\mathbb I^d)}
   =
  \sum_{k'=1}^{(N^*)^d}
  \int_{A_{k'}}\left|\sum_{k=1}^{(N^*)^d}(\epsilon_k-\epsilon_k')g_k(x)\right|dx \nonumber \\
 & =&
   \sum_{k'=1}^{(N^*)^d}
  \int_{A_{k'}}\left| (\epsilon_{k'}-\epsilon_{k'}')g_{k'}(x)\right|dx
     \nonumber \\
  &=&
  (N^*)^{-r}\sum_{k'=1}^{(N^*)^d}|\epsilon_{k'}-\epsilon_{k'}'|
  \int_{A_{k'}}\left|  g(N^*(x-\xi_{k'})) \right|dx.
\end{eqnarray}
For each $k'=1,2,\dots,(N^*)^d$, when $x$ runs over $A_{k'}$,
$N^*(x-\xi_{k'})$ runs over a cube $S$ centered at $\xi_{k'}$ and
with side-length $1$.  Then, $g(x)=1$ for
$x\in[-1/(2\sqrt{d}),1/(2\sqrt{d})]^d$ yields
\begin{eqnarray}\label{integral3}
    &&\int_{A_{k'}}\left|  g(N^*(x-\xi_{k'})) \right|dx\\
    &=&\int_{A_{k'}}\left|  g(N^*(x-\xi_{k'}))
    \right|d(x-\xi_{k'})\nonumber\\
    &\geq&(N^*)^{-d}\int_{[-1/(2\sqrt{d}),1/(2\sqrt{d})]^d}|g(x)|dx\geq (\sqrt{d}N^*)^{-d}.\nonumber
\end{eqnarray}
But Lemma \ref{Lemma: Tools for vc} with $\epsilon,\epsilon'\in
G^{(N^*)^d}$ shows
\begin{equation}\label{integral4}
     \sum_{k'=1}^{(N^*)^d}|\epsilon_{k'}-\epsilon_{k'}'|\geq
     (N^*)^d/2.
\end{equation}
Hence, for arbitrary $f,f_1\in\mathcal F_{ G^{(N^*)^d}}$, inserting
(\ref{integral3}) and (\ref{integral4}) into (\ref{integral2}), we
obtain
\begin{equation}\label{integral 5}
   \|f-f_1\|_{L_1(\mathbb I^d)}\geq
   (N^*)^{-r}(\sqrt{d}N^*)^{-d}(N^*)^d/2\geq
   \frac12d^{-d/2}(N^*)^{-r}.
\end{equation}
This completes the proof of Lemma \ref{Lemma:distance}.
\end{proof}

By the help of the above four lemmas, we  are in a position to prove
Theorem \ref{theorem:Relation c and l}.

\begin{proof}[Proof of Theorem \ref{theorem:Relation c and l}] For arbitrary $\nu>0$, denote
\begin{equation}\label{def delta}
    \delta=\mbox{dist}(\mathcal
    F_{ G^{(N^*)^d}},V,L_1(\mathbb I^d))+\nu.
\end{equation}
For any $f\in\mathcal F_{ G^{(N^*)^d}},$ define a function $Pf\in V$
  such that
\begin{equation}\label{projection}
        \|f-Pf\|_{L_1(\mathbb
        I^d)}\leq\delta.
\end{equation}
Due to (\ref{def delta}), there are more than one  $Pf$ satisfying
(\ref{projection}).  Define   $\mathcal T_{
G^{(N^*)^d}}:=\{Pf:f\in\mathcal F_{ G^{(N^*)^d}}\}\subseteq V$. For
arbitrary   $f,f_1\in\mathcal F_{ G^{(N^*)^d}}$ with $f\neq f_1$,
write  $f^*=Pf$ and $f^*_1=Pf_1$. Then
\begin{eqnarray*}
     &&\|f^*-f_1^*\|_{L_1(\mathbb I^d)}
      =
     \|Pf-Pf_1\|_{L_1(\mathbb I^d)}\\
     &=&
     \|Pf-f+f-f_1+f_1-Pf_1\|_{L_1(\mathbb I^d)}\\
     &\geq&
     \|f-f_1\|_{L_1(\mathbb I^d)}
     -
     \|Pf-f\|_{L_1(\mathbb I^d)}
     -\|Pf_1-f_1\|_{L_1(\mathbb I^d)},
\end{eqnarray*}
which together with  (\ref{distance111}) and (\ref{def delta}) shows
\begin{equation}\label{1.lower1}
   \|f^*-f_1^*\|_{L_1(\mathbb I^d)}\geq \frac12d^{-d/2}(N^*)^{-r}-2\delta.
\end{equation}
We now claim $\delta> \frac18d^{-d/2}(N^*)^{-r}$ for $N^*$
satisfying
\begin{eqnarray}\label{Def.N*}
    &&(N^*)^d=\left\lceil32(1+\beta+3r/d)n\right. \nonumber\\
    &\times&\left.\log_2(2\tilde{C}_1+8d^{d/2}(1+\beta+3r/d+\tilde{C_2})+n)\right\rceil,
\end{eqnarray}
with $\lceil a\rceil$ denoting the smallest integer not smaller than
 the positive number $a$.
To prove the claim, suppose, to the contrary, that
\begin{equation}\label{condration condition}
     \delta\leq \frac18d^{-d/2}(N^*)^{-r},
\end{equation}
 then (\ref{1.lower1}) implies
$$
        \|f^*-f^*_1\|_{L_1(\mathbb I^d)}\geq \frac14d^{-d/2}(N^*)^{-r}.
$$
This shows that $f\neq f_1$ implies $f^*\neq f^*_1$.  So it follows
from Lemma \ref{Lemma: Tools for vc} that
$$
     |\mathcal T_{ G^{(N^*)^d}}|=|\mathcal F_{ G^{(N^*)^d}}| =|G^{(N^*)^d}| \geq 2^{(N^*)^d/16}.
$$
 Fixing
$\varepsilon_0=\frac14d^{-d/2}(N^*)^{-r}$, we then have
$$
    \mathcal
     M(\varepsilon_0,V)\geq 2^{(N^*)^d/16}.
$$
On the other hand, since $\mathcal T_{ G^{(N^*)^d}}\subseteq V$, it
follows from (\ref{covering condition}) and Lemma
\ref{Lemma:covering and packing} that
\begin{eqnarray*}
      \mathcal
     M(\varepsilon_0,V)&\leq& \mathcal
     N(\varepsilon_0/2,V)\leq
     \tilde{C}_1\left(\frac{2\tilde{C}_2n^\beta}{\varepsilon_0}\right)^n\\
     &=&\tilde{C}_1\left(2\tilde{C}_2n^\beta4d^{d/2}(N^*)^r\right)^n.
\end{eqnarray*}
Combining the above two inequalities, we get
\begin{equation}\label{number packing}
        2^{(N^*)^d/16}\leq
        \tilde{C}_1\left(2\tilde{C}_2n^\beta4d^{d/2}(N^*)^r\right)^n.
\end{equation}
This together with (\ref{Def.N*}) shows
\begin{eqnarray}\label{false inequality}
    &&2(1+\beta+3r/d)n \nonumber\\
    &\times&\log_2(2\tilde{C}_1+8d^{d/2}(1+\beta+3r/d+\tilde{C}_2)+n) \nonumber\\
     &<&
      \log_2(\tilde{C}_1)
      +
     n\log_2(\tilde{C}_24d^{d/2}) \\
     &+&
     \beta n\log n+ \frac{rn}d\log_2(32(\beta+1+3r/d))+ \frac{rn}d\log_2 n\nonumber\\
     &+&
     \frac{rn}{d}\log_2\log_2(2\tilde{C}_1+8d^{d/2}(1+\beta+3r/d+\tilde{C_2})+n).\nonumber
\end{eqnarray}
Since the righthand of the above inequality is smaller than
$$
       (2+\beta+3r/d)n\log_2(2\tilde{C}_1+8d^{d/2}(1+\beta+3r/d+\tilde{C}_2)+n),
$$
it leads to a contradiction.  This proves the claim $\delta>
\frac18d^{-d/2}(N^*)^{-r}$  for $N^*$ satisfying (\ref{Def.N*}).
 Noting for arbitrary $u\geq 2$,
$$
    \log_2(n+u)\leq \log_2u+\log_2(n+1)\leq (\log_2 u+1)\log_2(n+1),
$$
we   have
$$
       \delta>\frac18 d^{-d/2}(N^*)^{-r}\geq
       2C'
       (n\log_2(n+1))^{-r/d}.
$$
But (\ref{def delta}) with
$\nu=\delta/2$ shows
$$
   \mbox{dist}(\mathcal
    F_{ G^{(N^*)^d}},V,L_1(\mathbb I^d))=\frac\delta2>C'
       (n\log_2(n+1))^{-r/d}.
$$
Therefore, it follows from Lemma \ref{Lemma:Subset} that
\begin{eqnarray*}
   &&\mbox{dist}(Lip^{(r,c_0)},V,L_1(\mathbb I^d))\geq\mbox{dist}(\mathcal
    F_{ G^{(N^*)^d}},V,L_1(\mathbb I^d))\\
    &\geq& C'
       (n\log_2(n+1))^{-r/d}.
\end{eqnarray*}
  This
completes the proof of Theorem \ref{theorem:Relation c and l}.
\end{proof}

\section*{Appendix C: Proof of Theorem \ref{Theorem:lower bound for deep
nets} }

Combining Theorem \ref{Theorem:covering number} and Theorem
\ref{theorem:Relation c and l}, we can prove Theorem
\ref{Theorem:lower bound for deep nets} as follows.

\begin{proof}[Proof of Theorem \ref{Theorem:lower bound for deep
nets}]
 It suffices to prove Theorem \ref{Theorem:lower bound for deep nets} for $p=1$,
since $2^d\|f\|_{L_p(\mathbb I^d)}\geq \|f\|_{L_1(\mathbb I^d)}$ for
arbitrary $f\in L_p(\mathbb I^d)$ and $p\geq 1$.
 Due to Theorem \ref{Theorem:covering number},
  (\ref{covering condition}) in Theorem
\ref{theorem:Relation c and l} is satisfied with $V=\mathcal
H_{n,L,\gamma,\mathcal
     R}$, $\tilde{C}_1=1$,
$\beta=0$ and $\tilde{C}_2= \left(c_3\mathcal
RD_{\max}\right)^{2(L+1)L}$. Hence, it follows from Theorem
\ref{theorem:Relation c and l} that
\begin{eqnarray*}
   &&\mbox{dist}(Lip^{(r,c_0)},\mathcal
H_{n,L,\sigma,\mathcal
     R\}},L_1(\mathbb I^d))\\
     &\geq&
   C'
   \left[2\mathcal A_L\log_2(2\mathcal A_L+1)\right]^{-\frac{r}{d}}.
\end{eqnarray*}
where
\begin{eqnarray*}
     &&C'=
     \frac14 d^{-d/2}\left[32(1+3r/d)
    \left(\log_2(2+8d^{d/2}(1+3r/d \right.\right.\\
     &+&\left.\left.        \left(c_3\mathcal
RD_{\max}\right)^{2(L+1)L})+1\right)\right]^{-\frac{r}{d}}.
\end{eqnarray*}
Since
\begin{eqnarray*}
       && 2+8d^{d/2}(1+3r/d
      +    \left(c_3\mathcal
      RD_{\max}\right)^{2(L+1)L})\\
      &\leq&
 (48d^{d/2}c_3\mathcal R D_{\max})^{2(L+1)L},
\end{eqnarray*}
and
$$
    \log_2(48d^{d/2}c_3\mathcal R
       D_{\max})\leq
       (\log_2(48d^{d/2}c_3)+1)\log_2(\mathcal R
       D_{\max}),
$$
we have
\begin{eqnarray*}
       C'
       &\geq&
       \bar{C}_1' [L^2\log_2(\mathcal R
       D_{\max})]^{-r/d}
\end{eqnarray*}
 where
  $\bar{C}_1':=\frac12\left[128(1+3r/d)(\log_2(48d^{d/2}c_3+1))\right]^{-\frac{r}{d}}$.
 Therefore,
\begin{eqnarray*}
   &&\mbox{dist}(Lip^{(r,c_0)},\mathcal
     H_{n,L,\sigma,\mathcal
     R},L_1(\mathbb I^d))\\
     &\geq&
      \bar{C}_1' [L^2\log_2(\mathcal R
       D_{\max})]^{-\frac{r}{d}}
   \left[2n\log_2(2n+1)\right]^{-\frac{r}{d}}\\
   &\geq&
   C[L^2n\log_2n \log_2(\mathcal R
       D_{\max})]^{-\frac{r}d}
\end{eqnarray*}
with $C=3^{-r/d}\bar{C}_1'$.
 This
completes the proof of Theorem \ref{Theorem:lower bound for deep
nets}.
\end{proof}

\section*{Acknowledgement}
The research was supported by the National Natural Science
Foundation of China [Grant Nos. 11531013, 61816133, 11571078,
11631015]. Lei Shi is also supported by the Program of Shanghai
Subject Chief Scientist (Project No.18XD1400700).

\end{document}